\documentclass[conference]{IEEEtran}
\IEEEoverridecommandlockouts

\usepackage{cite}
\usepackage{amsmath,amssymb,amsfonts}
\usepackage{algorithm}
\usepackage{algorithmic}
\usepackage{graphicx}
\usepackage{textcomp}
\usepackage{xcolor}
\def\BibTeX{{\rm B\kern-.05em{\sc i\kern-.025em b}\kern-.08em
    T\kern-.1667em\lower.7ex\hbox{E}\kern-.125emX}}

\usepackage{booktabs}
\usepackage{multirow}
\usepackage{microtype}
\usepackage{placeins}
\usepackage{seqsplit}
\usepackage[hidelinks]{hyperref}

\usepackage{framed}
\newif\ifdraftnotes
\draftnotesfalse 

\newcommand{\ours}{DP-GRPO}

\newif\ifplanners
\plannersfalse 
\definecolor{shadecolor}{RGB}{245,245,235}
\newenvironment{plannerbox}{\ifplanners\begin{shaded}\small\noindent\textbf{Planner.}\ \ignorespaces\else\setbox0\vbox\bgroup\fi}{\ifplanners\end{shaded}\else\egroup\fi}

\newtheorem{definition}{Definition}
\newtheorem{proposition}{Proposition}
\newtheorem{theorem}{Theorem}
\newtheorem{lemma}{Lemma}
\newtheorem{corollary}{Corollary}
\newtheorem{remark}{Remark}

\begin{document}

\title{Reward-Driven Learning under\\Prompt-Level Differential Privacy}

\author{\IEEEauthorblockN{Jiachen Zhao}
\IEEEauthorblockA{University of Notre Dame\\jzhao7@nd.edu}
\and
\IEEEauthorblockN{Antonia Januszewicz}
\IEEEauthorblockA{University of Notre Dame\\ajanusze@nd.edu}
\and
\IEEEauthorblockN{Taeho Jung\textsuperscript{*}}
\IEEEauthorblockA{University of Notre Dame\\tjung@nd.edu}
\thanks{\textsuperscript{*}Corresponding author.}}

\maketitle

\begin{abstract}
Reinforcement learning with verifiable rewards (RLVR) trains a language
model on problems that may themselves be confidential, and the trained
model can reveal which problems it saw. We study RLVR under prompt-level
differential privacy: the released weights must be
$(\epsilon,\delta)$-differentially private with respect to the presence
of any one training problem. Taking the group of responses to one prompt
as the privacy record, our method aggregates their gradients, clips the
prompt's contribution once, adds Gaussian noise, and composes the privacy
loss across updates, so the budget depends on neither the number of
responses per prompt nor the clipping norm; to our knowledge this is the
first differential privacy guarantee for RLVR training. We train
Qwen2.5-1.5B-Instruct with LoRA at a per-run budget of $\epsilon=8$ and
compare, on the same prompts and at the same budget, a control that
removes only the reward signal and two private supervised fine-tuning
recipes. The reward signal improves accuracy over the control by 2.65
points on MATH and 3.24 on GSM8K, in every seed; the improvement survives
a format-robust scorer, at 1.3 points on MATH, and is not explained by
response length. At the same budget the private model outperforms both
supervised recipes on MATH and GSM8K by 2.3 to 3.8 points, retains
85--90\% of the gain of non-private GRPO on these tasks, and on MATH the
noise of an eightfold tighter budget costs at most 1.2 points. The reward
effect also carries to CommonsenseQA, an exploratory non-mathematical
task. Verifier feedback thus remains a usable learning signal under
prompt-level privacy.
\end{abstract}

\begin{IEEEkeywords}
differential privacy, reinforcement learning with verifiable rewards,
group relative policy optimization
\end{IEEEkeywords}

\section{Introduction}
\label{sec:intro}

Reinforcement learning with verifiable rewards (RLVR) improves a language
model from its own responses: for each training problem the policy samples
several candidate solutions, a program checks each against the problem's
reference answer, and the policy is updated toward the ones that
pass~\cite{lambert2025tulu,shao2024deepseekmath}. Needing neither
demonstrations nor a learned reward model, the method fits settings where
solution traces are scarce and correctness is cheap to check: assessment
items with answer keys, cases with adjudicated outcomes, tickets with
confirmed resolutions. In such settings the training problems are
themselves confidential, and RLVR reads each one intensively. Every time a
problem is sampled, the policy conditions $G$ generations on it, every
reward is computed against its reference answer, and the resulting update
is written into the released weights. Membership auditing already detects
prompt-specific traces in RLVR-trained
models~\cite{liu2026auditingdatamembershipreinforcement}. A model trained
on confidential problems therefore needs a guarantee about what its
weights reveal about them, and to our knowledge none exists for RLVR.

Two features of RLVR fix where that guarantee must live and for what unit.
In RLHF the sensitive data are human preferences, so private RLHF trains
the reward model under DP~\cite{zhang2025userprivate,cho2026decoupled} and
the policy optimized against it is covered by post-processing. RLVR has
no reward model to privatize: the verifier is public code, and the
sensitive object is the prompt the policy reads at every rollout. The
gradient step is the only stage that touches private data, so it must be
the private mechanism, as in DP-SGD~\cite{abadi2016deep} and private
PPO~\cite{wu2023privately}. The unit is the prompt: a training record is
a problem together with the reference answer its verifier checks, the
data are defined at no finer granularity, and presence of one prompt is
the event the membership attack recovers. Protection at this unit
composes upward; when one person contributes several prompts, a guarantee
for that person follows by group privacy or by accounting at the user
boundary~\cite{charles2025userlevel}. We therefore require
\emph{prompt-level} differential privacy: the released weights must be
$(\epsilon,\delta)$-DP with respect to adding or removing any one training
problem (Definition~\ref{def:prompt-dp}). This is a guarantee about
training items; a guarantee for the people who contribute them is
derived from it, at a cost that grows with the number of items per
person (Section~\ref{sec:threat}).

The obstacle is the group structure of group relative policy optimization
(GRPO)~\cite{shao2024deepseekmath}. DP-SGD protects the unit whose gradient
is clipped~\cite{abadi2016deep}, and the direct extension of private policy
optimization clips each sampled response, as private PPO does with its one
response per prompt~\cite{wu2023privately}. GRPO's $G$ completions per
prompt are not independent contributions: their rewards are normalized
within the group, so changing one reward changes every advantage, and the
loss averages over the group. Removing a prompt removes $G$ clipped terms,
so a per-completion mechanism has sensitivity $GC$ rather than $C$, and
protecting the prompt through it costs group privacy over $G$ records
(Proposition~\ref{prop:group}). User-level DP meets the same problem when
one user contributes several examples and resolves it by clipping the
user's aggregated contribution~\cite{chua2024mind,charles2025userlevel}.
We do the same for the prompt group: the losses of all $G$ completions are
aggregated into one prompt gradient, which is clipped once and perturbed
once. Because GRPO's normalization uses only the group's own rewards, this
gradient depends on the prompt alone given the current parameters, so the
sensitivity is $C$ for every $G$ and the budget depends only on the
sampling rate, the noise multiplier, the number of rounds, and $\delta$.
Group size and clipping norm become utility and compute parameters with no
privacy cost. \ours{} is DP-SGD with this record and the single-update
GRPO loss as the per-record objective, applied to LoRA
parameters~\cite{hu2022lora,yu2022differentially}, and
Theorem~\ref{thm:privacy} gives its guarantee by standard
subsampled-Gaussian accounting.

Whether anything is learned under this mechanism is the open question,
and the answer is not obvious. Each prompt's influence is clipped to a
norm that a single noisy step can dominate, and the learning signal is a
within-group contrast whose sign carries the information. Answering it
requires isolating the reward signal from everything else the private
pipeline does, which existing evaluations of private post-training do not
attempt. We train one 1.5B-parameter instruction model with rank-4 LoRA
under a fixed protocol: a design study on MATH sets the operating point,
and the same arms then run on MATH, on GSM8K as the confirmatory
benchmark, and on CommonsenseQA as an exploratory non-mathematical task,
every arm spending the same per-run budget, $\epsilon=8$, on the same
prompts. A zero-advantage control, the same private pipeline with every
group-relative advantage set to zero after generation and verification,
isolates the reward signal. Two private imitation arms, trained on
verifier-filtered and on reference solutions under the same sampling,
clipping, noise, and accountant, test the other way to spend that budget.
Endpoints are fixed in advance, null results carry minimum detectable
effects, and every gain is checked against a format-robust scorer and
against response length. Three findings result.

(C1)~\emph{Verifier rewards improve the policy under prompt-level
privacy.} \ours{} exceeds the zero-advantage control by $+2.65$ [$1.79$,
$3.51$] pp on MATH and $+3.24$ [$1.61$, $4.89$] pp on GSM8K, with every
seed-level contrast positive, while the control stays within 0.4 points
of the base, as expected once the reward term is removed. The contrast survives a format-robust scorer that
also credits correct unboxed answers, staying resolved at $+1.34$ [$0.54$,
$2.14$] pp on MATH and positive in every seed, and it does not come from
shorter responses: \ours{}'s responses are as long as the base's or
longer. The effect carries beyond mathematics: on CommonsenseQA, \ours{}
exceeds the control by $+4.26$ [$2.50$, $6.06$] pp under a final-answer
rescore.

(C2)~\emph{At a matched budget, reward optimization outperforms private
imitation on the mathematical tasks.} \ours{} exceeds verifier-filtered
and gold DP-SFT by 2.32 to 3.79 pp on MATH and GSM8K, and exceeds MATH
variants trained at its learning rate or for twice as many updates by 2.10
and 1.78 pp. Neither supervised arm moves beyond its detectable range of the base,
even though gold DP-SFT trains on the worked reference solutions as
token-level targets, which \ours{} touches only through the verifier's
answer check. The comparison is between the evaluated recipes at a matched
budget, and the same supervised pipeline does learn where its targets are
short: on CommonsenseQA, whose gold targets are four-token answer
declarations, gold DP-SFT ties \ours{}.

(C3)~\emph{At this operating point, the price of privacy is small on the
mathematical tasks and the budget can be tightened without a measured
loss.} Against non-private GRPO runs measured over their own bases,
\ours{} retains 85\% of the gain on MATH and 90\% on GSM8K. On MATH, the noise of an
eightfold tighter budget, $\epsilon{=}1$, changes accuracy by $-0.57$
[$-1.19$, $0.06$] pp, a cost of at most 1.2 points (a utility measurement
of the procedure at that noise level, Section~\ref{sec:design-stage1}),
and no tested clipping norm changes it beyond its MDE. Where more remains to learn the price rises: on CommonsenseQA
\ours{} retains 45\% of the non-private gain, and a non-private run at
the same hyperparameters is 5.8 points higher.
Section~\ref{sec:discussion} traces this pattern to the ratio of noise to
clipped signal, which shortens each step under AdamW and costs most where
the attainable gain is largest; on the mathematical tasks that gain is
bounded by the 1.5B-parameter base itself.

This paper makes three contributions.
\begin{itemize}
\item \emph{A prompt-level privacy guarantee for RLVR.} Taking the prompt
  group as the record and clipping its aggregated gradient makes the
  released LoRA trajectory $(\epsilon,\delta)$-DP with respect to any one
  training problem under stated assumptions (Section~\ref{sec:threat}),
  with a budget independent of
  the group size and the clipping norm (Algorithm~\ref{alg:dpgrpo},
  Lemma~\ref{lem:round-main}, Theorem~\ref{thm:privacy}). To our knowledge
  this is the first differential privacy guarantee for GRPO or RLVR
  training. The same accounting covers the private supervised baselines
  (Corollary~\ref{cor:baselines}), so every comparison in the paper is at
  a matched guarantee, and mechanism-verification tests check the
  implementation invariants the analysis relies on: cross-prompt
  isolation, the clipping identity, the sensitivity bound and Poisson
  sampling (Appendix~\ref{app:mechanism-verification}).
\item \emph{A protocol that isolates what the reward signal contributes
  under privacy.} The zero-advantage control removes only the advantages
  from an otherwise identical private pipeline; matched budgets, fixed
  endpoints, minimum detectable effects for null results, format-robust
  rescoring and response-length checks separate reward learning from private
  training as such, from answer formatting, and from response length. The
  protocol applies to any private post-training method.
\item \emph{Evidence that reward learning survives prompt-level privacy,
  and where privacy costs.} On two mathematical-reasoning benchmarks and an
  exploratory non-mathematical task, verifier-derived group-relative
  rewards remain a learning signal under the mechanism, outperform the
  evaluated private imitation recipes on the mathematical tasks, retain
  most of the non-private gain there, and are insensitive to the budget
  down to $\epsilon{=}1$ on MATH; the larger cost on CommonsenseQA
  locates where the noise binds.
\end{itemize}

\section{Related Work}
\label{sec:related}

\subsection{Which stage of post-training is private}
Private reward-based post-training differs mainly in which stage reads
private data and therefore carries the guarantee. When the sensitive data
are human preferences, one route privatizes the reward model, at the user
level~\cite{zhang2025userprivate} or through decoupled reward
modeling~\cite{cho2026decoupled}, and the policy trained against it is
covered by post-processing. A second route privatizes the policy update
itself: private PPO on LoRA
parameters~\cite{wu2023privately}, DP-AdamW applied to the DPO and PPO
stages~\cite{chen2025improved}, and trajectory-level DP policy
gradient~\cite{rio2025dppg}, in each case with a learned reward and one
response per record. A third route avoids training on the private data at
all: it fine-tunes a generator under DP, samples synthetic instructions,
and runs ordinary alignment on those~\cite{yu2024privacypreserving}. RLVR
closes off the first and third routes. There is no reward model to
privatize, and a synthetic problem comes without the verified reference
answer that the reward requires; the prompt is what the policy reads at
every rollout (Section~\ref{sec:intro}), so the guarantee must attach to
the policy update, as in the second route, but for a record that GRPO's
group structure defines differently (Section~\ref{sec:record}). The
leakage this addresses is documented: membership auditing detects
prompt-specific traces in RLVR-trained
models~\cite{liu2026auditingdatamembershipreinforcement}, and GRPO on
benign factual data amplifies extraction of memorized personal
data~\cite{zhang2026benign}. Neither work evaluates a defense. Adjacent
lines answer different questions. KL-regularized sampling makes a sampled
action private~\cite{zhang2025klregularization}, not a released weight
trajectory; private RL theory gives sample-complexity and regret
bounds~\cite{he2025policy,he2026differentiallyprivatereinforcementlearning}
without a mechanism for LLM post-training; DP-OPD trains a student with
DP-SGD on its own generations but scores them with a frozen
teacher~\cite{khadem2026dpopd}; and RL used \emph{for} privacy, tuning
DP-SGD~\cite{khadangi2025rldp}, steering a DP text
generator~\cite{hu2025actg} or training
anonymizers~\cite{ghiasvand2026grasp}, does not protect the training
prompts. To our knowledge, no prior work states a differential privacy
guarantee for GRPO or RLVR training.

\subsection{Closest work}
Wu et al.~\cite{wu2023privately} generate responses to private inputs,
train the policy with LoRA and a DP variant of PPO, and release the
policy. We keep that release object and differ in three ways. The learning
signal is GRPO's group-relative advantage, which couples every response to
its prompt and changes both the record and the clipping unit. The reward
is a public verifier, which removes the reward model from the privacy
analysis and from the budget. And the evaluation includes a matched
intervention that removes only the advantages, so the contribution of the
reward signal inside the private procedure is measured rather than
assumed.

\subsection{The privacy record}
DP-SGD protects whatever is clipped~\cite{abadi2016deep}. User-level DP
enlarges the record to all of one user's examples and shows that the
enlargement changes both the guarantee and the best training
recipe~\cite{chua2024mind,charles2025userlevel}. GRPO poses the question
from the other side: the responses to one prompt are generated from it
and normalized against each other, and the GRPO gradient is a U-statistic
over the group~\cite{zhou2026ustatistic}, so per-response clipping does
not bound the prompt's influence. Our record is the prompt with all of its
responses, and it differs from a user-level record in kind, not only in
name: a user's examples are fixed data, whereas a prompt's responses are
drawn afresh from the current policy in every round, so the record has no
fixed members, no examples-per-unit parameter arises, and the guarantee
must hold over the randomness of generation, which is what assumptions
(A2) and (A3) of Section~\ref{sec:privacy-guarantee} secure. With
the record fixed, low-rank adaptation~\cite{yu2022differentially}, scale
and optimizer choices~\cite{li2022large,de2022unlocking}, and
privacy-loss-distribution
accounting~\cite{sommer2019privacy,koskela2020computing,doroshenko2022connect}
carry over unchanged.

\subsection{What a credible gain requires}
On the privacy side, membership
attacks~\cite{liu2026auditingdatamembershipreinforcement} demonstrate
leakage and one-run auditing~\cite{steinke2023privacy} lower-bounds it;
neither upper-bounds it, so neither replaces the mechanism proof, and we
report our canary diagnostic on those terms
(Appendix~\ref{app:mechanism-verification}). On the utility side, an
accuracy gain after RLVR is easy to misattribute. The large gains reported
for small Qwen models start from base rather than instruction-tuned
checkpoints~\cite{zeng2025simplerlzoo}; part of any gain is format
correction~\cite{wang2025oneshot}; RLVR mainly reweights solutions the
base already samples~\cite{yue2025doesrl}; Qwen2.5-Math improves even
under random rewards~\cite{shao2026spurious}, which has been attributed to
contamination~\cite{wu2026reasoning}; and small gains move with seed,
decoding, answer extraction~\cite{hochlehnert2025sober} and GRPO's length
bias~\cite{liu2025understanding}. These findings set the protocol of
Section~\ref{sec:experiments}: an already post-trained starting
model~\cite{qwen2024qwen25}, a control that holds generation,
verification, clipping and noise fixed and removes only the advantages,
fixed endpoints across seeds, and format-robust and length checks.

\subsection{Private supervised alternatives}
Supervised fine-tuning is the other way to spend a prompt-level budget:
raise the likelihood of a target response instead of weighting sampled
responses by their rewards, under DP-SGD with parameter-efficient
adaptation~\cite{abadi2016deep,li2022large,yu2022differentially}. Our two
comparators (Section~\ref{sec:experiments}) differ in where the targets
come from. Verifier-filtered DP-SFT descends from rejection-sampling
fine-tuning~\cite{yuan2023scaling} and the self-training loops of
STaR~\cite{zelikman2022star} and ReST$^{EM}$~\cite{singh2024beyond},
applied for one round under the same prompt-level mechanism; gold DP-SFT
imitates the dataset's reference solutions. DeepSeekMath places
supervised, rejection-sampling and policy-gradient updates in one
framework~\cite{shao2024deepseekmath}, and their empirical ranking is
setting-dependent: self-training can beat imitation of human
solutions~\cite{singh2024beyond}, and RL-versus-SFT comparisons turn on
the targets and the evaluation
distribution~\cite{havrilla2024teaching,chu2025sft,hochlehnert2025sober}.
Our comparison holds the starting model, prompt set, clipping mechanism
and per-run budget fixed and, because the supervised clipping norm and
learning rates differ from GRPO's, compares specified private procedures
rather than objectives in the abstract.

\section{Background and Preliminaries}
\label{sec:background}

This section fixes the objects the rest of the paper uses: the
differential privacy definition and the two facts about it we need
(Section~\ref{sec:dp}), the GRPO objective (Section~\ref{sec:grpo}), the
DP-SGD mechanism and its privacy profile (Section~\ref{sec:dpsgd}), and,
from these, the privacy record for GRPO and the guarantee that
Theorem~\ref{thm:privacy} proves (Section~\ref{sec:record}).

\subsection{Differential Privacy}
\label{sec:dp}

A dataset $D$ is a finite multiset of records from a universe $\mathcal X$.
Two datasets $D$ and $D'$ are \emph{neighbors}, written $D\simeq D'$, if one
is obtained from the other by adding or removing exactly one record;
Definition~\ref{def:prompt-dp} specializes $\mathcal X$ and this relation
to prompts.

\begin{definition}[Differential privacy~\cite{dwork2014algorithmic}]
\label{def:dp}
A randomized mechanism $\mathcal M$ is $(\epsilon,\delta)$-differentially
private if, for all $D\simeq D'$ and every measurable set $O$ of outputs,
\begin{equation}
\Pr[\mathcal M(D)\in O]\le e^{\epsilon}\Pr[\mathcal M(D')\in O]+\delta.
\label{eq:dp}
\end{equation}
\end{definition}

\begin{proposition}[Group privacy]
\label{prop:group}
If $\mathcal M$ is $(\epsilon,\delta)$-DP and $D,D'$ differ in $k$ records,
then for every measurable $O$,
$\Pr[\mathcal M(D)\in O]\le e^{k\epsilon}\Pr[\mathcal M(D')\in O]
+ke^{(k-1)\epsilon}\delta$.
\end{proposition}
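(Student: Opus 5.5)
The plan is the standard chaining argument along a path of neighboring datasets, with induction on $k$. Since $D$ and $D'$ differ in $k$ records under the add/remove relation of Definition~\ref{def:dp}, we can build a sequence $D=D_0\simeq D_1\simeq\cdots\simeq D_k=D'$. At each step we add or remove one of the differing records: first remove those in $D\setminus D'$, then add those in $D'\setminus D$, counting multiplicities since datasets are multisets. Every consecutive pair is a neighboring pair, so \eqref{eq:dp} holds between $D_{i}$ and $D_{i+1}$ for every measurable $O$. We write $p_i=\Pr[\mathcal M(D_i)\in O]$ for a fixed $O$.

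The claim to prove by induction on $j$ is
\[
p_0\le e^{j\epsilon}p_j+\delta\sum_{i=0}^{j-1}e^{i\epsilon}.
\]
The base case $j=1$ is \eqref{eq:dp} itself. For the inductive step we apply \eqref{eq:dp} to the pair $(D_j,D_{j+1})$, which gives $p_j\le e^{\epsilon}p_{j+1}+\delta$. Substituting this into the hypothesis and multiplying through by $e^{j\epsilon}$ yields
\[
p_0\le e^{(j+1)\epsilon}p_{j+1}+\delta\sum_{i=0}^{j}e^{i\epsilon}.
\]
Both multiplications are by positive constants, so the inequalities are preserved.

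Finally, at $j=k$ we bound each term of the geometric sum by its largest term, $e^{i\epsilon}\le e^{(k-1)\epsilon}$ for $i\le k-1$. This gives $\delta\sum_{i=0}^{k-1}e^{i\epsilon}\le k e^{(k-1)\epsilon}\delta$, which is exactly the stated additive term. The bound is slightly loose: the exact sum is $\delta\,(e^{k\epsilon}-1)/(e^{\epsilon}-1)$.

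The only point needing care is that Definition~\ref{def:dp} quantifies over all neighboring pairs in both orders. This lets us apply it in whichever direction each step of the path requires. We also need that the path length equals the number of differing records, which is how ``differ in $k$ records'' is read here. I expect no real obstacle beyond this bookkeeping. The inequality is applied to the same set $O$ throughout, so no measurability or coupling argument is needed.
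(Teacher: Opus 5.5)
Your proof is correct and is the same argument the paper gives: chain Definition~\ref{def:dp} along $k$ add-or-remove steps to get the additive term $\delta\sum_{j<k}e^{j\epsilon}$, then bound each summand by $e^{(k-1)\epsilon}$. Your induction only writes that chaining out explicitly. The closed form $(e^{k\epsilon}-1)/(e^{\epsilon}-1)$ you quote holds only for $\epsilon>0$, but it is a side remark and plays no role in the bound.
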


\noindent Chaining Definition~\ref{def:dp} over $k$ add-or-remove steps gives
the additive term $\delta\sum_{j<k}e^{j\epsilon}$; compare
\cite[Lemma~2.2]{vadhan2017complexity}.

\begin{proposition}[Post-processing~\cite{dwork2014algorithmic}]
\label{prop:post}
If $\mathcal M$ is $(\epsilon,\delta)$-DP and $f$ is any randomized map
that does not depend on $D$, then $f\circ\mathcal M$ is
$(\epsilon,\delta)$-DP.
\end{proposition}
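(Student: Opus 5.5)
Since Proposition~\ref{prop:post} is stated with a citation, it needs no new mechanism argument: I will deduce it from Definition~\ref{def:dp} by one standard reduction, handling deterministic maps first and then averaging over the independent randomness of $f$. The guiding idea is that the event ``$f\circ\mathcal M(D)\in O$'' is, for each fixed realization of $f$'s internal coins, an event about $\mathcal M(D)$ alone. There it is controlled by \eqref{eq:dp}.

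\emph{Deterministic case.} Let $f$ be a measurable map from the output space of $\mathcal M$ to some space $\mathcal Y$, chosen without reference to $D$. For a measurable $O\subseteq\mathcal Y$, set $O'=f^{-1}(O)$; this set is measurable and does not depend on $D$. Then for any $D\simeq D'$,
\begin{equation*}
\Pr[f(\mathcal M(D))\in O]=\Pr[\mathcal M(D)\in O']\le e^{\epsilon}\Pr[\mathcal M(D')\in O']+\delta=e^{\epsilon}\Pr[f(\mathcal M(D'))\in O]+\delta .
\end{equation*}

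\emph{Randomized case.} Write $f(y)=g(y,R)$, where $R$ is a random seed drawn independently of $\mathcal M$ and of $D$ from a fixed law $\mu$, and $g$ is jointly measurable. Fix $r$ and apply the deterministic case to $g(\cdot,r)$:
\begin{equation*}
p_D(r):=\Pr[g(\mathcal M(D),r)\in O]\le e^{\epsilon}p_{D'}(r)+\delta .
\end{equation*}
By independence of $R$ and Fubini, $\Pr[f(\mathcal M(D))\in O]=\int p_D(r)\,d\mu(r)$. Integrating the pointwise inequality against the probability measure $\mu$ preserves it, because the additive $\delta$ integrates to $\delta$. This yields \eqref{eq:dp} for $f\circ\mathcal M$.

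\emph{Main obstacle.} The only delicate point is making ``does not depend on $D$'' precise. It requires two things: the law $\mu$ and the map $g$ are fixed before $D$ is seen, and $R$ is independent of $\mathcal M$'s coins. Without these, conditioning on $r$ would change the distribution of $\mathcal M(D)$ and the pointwise step would fail. A secondary, routine point is joint measurability of $g$, which Fubini needs. I would state both as hypotheses, the same way the paper later phrases its assumptions (A2)--(A3) for generation randomness.
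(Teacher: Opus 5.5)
Your argument is correct and is the standard proof in the cited reference~\cite{dwork2014algorithmic}: preimages handle deterministic $f$, and averaging over independent coins handles randomized $f$. The paper states the proposition without proof and relies on that citation; within its own appendix the same conclusion also follows in one line from Lemma~\ref{lem:hs}(i), applied to the Markov kernel of $f$, together with the $H_{e^\epsilon}$ characterization of $(\epsilon,\delta)$-DP, which avoids needing the seed representation $f(y)=g(y,R)$.
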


\subsection{RLVR and GRPO}
\label{sec:grpo}

A policy $\pi_\theta$ is an autoregressive language model: given a prompt
$x$ it defines a distribution over completions $y=(y_1,\dots,y_L)$ through
$\pi_\theta(y\mid x)=\prod_{k=1}^{L}\pi_\theta(y_k\mid x,y_{<k})$. In
reinforcement learning with verifiable rewards
(RLVR)~\cite{lambert2025tulu}, each prompt carries a reference answer, and a
programmatic verifier $r(x,y)\in\{0,1\}$ reports whether the final answer in
$y$ matches it. Training maximizes the expected reward
$\mathbb E_{x}\,\mathbb E_{y\sim\pi_\theta(\cdot\mid x)}[r(x,y)]$ under a
penalty of weight $\beta$ on the divergence of $\pi_\theta$ from a fixed
reference policy $\pi_{\rm ref}$, here the public base model.

Group relative policy optimization (GRPO)~\cite{shao2024deepseekmath}
estimates this objective without a value function. For prompt $x_i$ it
draws $G$ completions $y_{i,1},\dots,y_{i,G}$ independently from a sampling
policy $\pi_{\rm old}$, computes rewards $r_{i,g}=r(x_i,y_{i,g})$, and
forms group-normalized advantages
\begin{equation}
A_{i,g}=\frac{r_{i,g}-\mu_i}{s_i+c},
\label{eq:adv}
\end{equation}
where $\mu_i$ and $s_i$ are the mean and standard deviation of
$\{r_{i,g}\}_{g=1}^G$ and $c>0$ is a stabilizing constant. Every token of
$y_{i,g}$ receives the advantage $A_{i,g}$. With the per-token probability
ratio $\rho_{i,g,k}=\pi_\theta(y_{i,g,k}\mid x_i,y_{i,g,<k})/
\pi_{\rm old}(y_{i,g,k}\mid x_i,y_{i,g,<k})$ and completion length
$L_{i,g}$, the objective for prompt $x_i$
is~\cite{shao2024deepseekmath,schulman2017proximal}
\begin{multline}
J_i(\theta)=\frac1G\sum_{g=1}^G\frac{1}{L_{i,g}}\sum_{k=1}^{L_{i,g}}
\Big[\min\!\big(\rho_{i,g,k}A_{i,g},\\
\operatorname{clip}(\rho_{i,g,k},1-\kappa,1+\kappa)A_{i,g}\big)
-\beta K_{i,g,k}\Big],
\label{eq:grpo}
\end{multline}
where $K_{i,g,k}=\pi_{\rm ref}/\pi_\theta-\log(\pi_{\rm ref}/\pi_\theta)-1$,
evaluated at token $y_{i,g,k}$, is the nonnegative per-token KL estimator
of Schulman~\cite{schulman2020kl} adopted by
DeepSeekMath~\cite{shao2024deepseekmath}. Under samples from $\pi_\theta$ it
is an unbiased estimate of the token-level
$\mathrm{KL}(\pi_\theta\Vert\pi_{\rm ref})$; its gradient is not an unbiased
estimate of the gradient of that divergence~\cite{tang2025pitfallskldivergencegradient}. We use
it as implemented, since the privacy analysis of
Section~\ref{sec:privacy-guarantee} holds for any per-prompt loss. When one gradient
step is taken per batch of rollouts, as in this paper, $\pi_{\rm old}$ and
$\pi_\theta$ coincide at the point of differentiation, so $\rho_{i,g,k}=1$,
the ratio clip is inactive, and
$\nabla_\theta\rho_{i,g,k}=\nabla_\theta\log\pi_\theta(y_{i,g,k}\mid
x_i,y_{i,g,<k})$.

\subsection{DP-SGD and the Subsampled Gaussian Mechanism}
\label{sec:dpsgd}

DP-SGD~\cite{abadi2016deep} trains parameters $\theta$ on a dataset of $n$
records with per-record losses $\ell_i(\theta)$. Round $t$ performs four
steps. (i)~\emph{Poisson sampling:} each record $i$ is included in $S_t$
independently with probability $q$. (ii)~\emph{Per-record clipping:} for
$i\in S_t$, the gradient $g_i=\nabla_\theta\ell_i(\theta_t)$ is projected to
$\ell_2$ norm at most $C$,
\begin{equation}
\bar g_i=g_i\min\!\left(1,\frac{C}{\lVert g_i\rVert_2}\right).
\label{eq:clip}
\end{equation}
(iii)~\emph{Noisy aggregation:} the round releases
$\widehat g_t=\bigl(\sum_{i\in S_t}\bar g_i+z_t\bigr)/(qn)$ with
$z_t\sim\mathcal N(0,\sigma^2C^2I)$, where $\sigma$ is the noise
multiplier and $qn$ is the public expected sample size.
(iv)~\emph{Update:} $\theta_{t+1}$ is computed from $\theta_t$ and
$\widehat g_t$ by a public optimizer.

Adding or removing one record changes the sum inside (iii) by a vector of
norm at most $C$. Rotating that vector onto a coordinate axis and dividing by
$C$ reduces one round, for fixed $\theta_t$, to a pair of one-dimensional
output laws, and $T$ adaptively composed rounds to their $T$-fold
products~\cite{balle2018privacy,zhu2022optimal}, whose hockey-stick
divergence is the privacy profile~\cite{balle2018privacy}.

\begin{definition}[Privacy profile of DP-SGD]
\label{def:profile}
For probability measures $P,Q$ and $\alpha\ge0$, let
$H_\alpha(P\Vert Q)=\sup_O\,[P(O)-\alpha\,Q(O)]$. For $q\in(0,1]$ and
$\sigma>0$, let
\begin{equation}
A_{q,\sigma}=(1-q)\,\mathcal N(0,\sigma^2)+q\,\mathcal N(1,\sigma^2),
\qquad
B_\sigma=\mathcal N(0,\sigma^2).
\label{eq:AB}
\end{equation}
The $T$-round privacy profile is, for $\epsilon\ge0$,
\begin{equation}
\delta_{q,\sigma,T}(\epsilon)=\max\bigl\{
H_{e^\epsilon}(A_{q,\sigma}^{\otimes T}\Vert B_\sigma^{\otimes T}),\,
H_{e^\epsilon}(B_\sigma^{\otimes T}\Vert A_{q,\sigma}^{\otimes T})\bigr\}.
\label{eq:profile}
\end{equation}
\end{definition}

Since $H_{e^\epsilon}(\mathcal M(D)\Vert\mathcal M(D'))$ is the smallest
$\delta$ for which~\eqref{eq:dp} holds on the ordered pair
$(D,D')$~\cite{balle2018privacy}, $T$ rounds of DP-SGD are
$(\epsilon,\delta_{q,\sigma,T}(\epsilon))$-DP for every $\epsilon\ge0$, and
the bound is attained in the worst case over losses~\cite{zhu2022optimal}.
The two terms in~\eqref{eq:profile} are the remove and add directions. The
profile depends on $(q,\sigma,T)$ and not on $C$, because sensitivity and
noise both scale with $C$. Accountants evaluate it through R\'enyi
divergences~\cite{mironov2017renyi,mironov2019sampled} or through the
privacy loss distribution~\cite{sommer2019privacy,koskela2020computing};
Section~\ref{sec:accounting} states which one this paper reports.

Two remarks on the mechanism are used below. First, the sensitivity bound
requires that, given $\theta_t$, the clipped vector $\bar g_i$ be a
function of record $i$ alone; a statistic shared across records before
clipping, such as a batch-level normalization, would let one record change
another's contribution and void the bound. Second, the clipping norm $C$ in
\eqref{eq:clip} is unrelated to the ratio clip $\kappa$ in \eqref{eq:grpo};
only $C$ enters the privacy analysis.

\paragraph{Low-rank adaptation}
LoRA~\cite{hu2022lora} freezes a pretrained weight matrix
$W\in\mathbb R^{d\times k}$ and trains an update $W+BA$ with
$B\in\mathbb R^{d\times r}$, $A\in\mathbb R^{r\times k}$, and
$r\ll\min(d,k)$. Under DP-SGD only the adapter parameters are clipped and
perturbed, and the frozen base weights are public. With $p$ trainable
coordinates the noise $z_t$ has expected squared norm $\sigma^2C^2p$, so
restricting $p$ lowers the noise at a fixed budget, which is the basis of
parameter-efficient private
fine-tuning~\cite{yu2021lowrank,yu2022differentially}.

\subsection{The Privacy Record in GRPO}
\label{sec:record}

DP-SGD protects the unit whose gradient is clipped in \eqref{eq:clip}, and
a unit spanning $k$ such records is protected only through
Proposition~\ref{prop:group}. The record is therefore a design choice.
User-level DP enlarges it from one example to the aggregated contribution
of all examples of one user, which is clipped as a
whole~\cite{chua2024mind,charles2025userlevel}; there the members of a
record are fixed data. Private PPO takes the prompt as the record but
generates only one response per prompt~\cite{wu2023privately}. GRPO's
record has neither property: its members are $G$ responses sampled from
the current policy, so they change every round and are coupled to one
another through the group statistics.

In GRPO the natural per-example unit, one completion $y_{i,g}$, is not an
independent contribution. The $G$ completions of prompt $x_i$ share the
input from which they are sampled and against which they are verified,
share the statistics $\mu_i$ and $s_i$ of \eqref{eq:adv}, so that changing
one reward changes every advantage of the group, and share the average in
\eqref{eq:grpo}. If each completion were a record clipped to $C$, removing
$x_i$ would remove $G$ clipped terms from the sum in step (iii) of
Section~\ref{sec:dpsgd}, giving sensitivity $GC$; a guarantee for the
prompt would then follow only from Proposition~\ref{prop:group} with
$k=G$, or from noise scaled to $GC$.

We therefore take the record to be the prompt with all of its
completions: the gradient $g_i$ of the per-prompt loss $-J_i(\theta_t)$,
which already aggregates the $G$ completions, is the quantity clipped in
\eqref{eq:clip}. Its sensitivity is
$C$ for every $G$, so the group size does not enter the privacy budget.
The first remark of Section~\ref{sec:dpsgd} holds because $\mu_i$ and $s_i$
are computed within the group of $x_i$; batch-level reward normalization
or advantage whitening across prompts would break it. A record is
accordingly a problem together with the reference answer its verifier
checks, and we write $x_i\in\mathcal X$ for the pair. Because Poisson
sampling runs over a fixed index set, we state the guarantee over a public
universe of $n$ slots, each holding at most one record.

\begin{definition}[Prompt-level differential privacy]
\label{def:prompt-dp}
Fix a public $n$. A \emph{prompt dataset} is $D=((b_i,x_i))_{i=1}^n$ with
presence bits $b_i\in\{0,1\}$ and records $x_i\in\mathcal X$, where $x_i$
is read only if $b_i=1$. Prompt datasets $D$ and $D'$ are neighbors,
$D\simeq D'$, if $b_k\ne b'_k$ for exactly one slot $k$ and
$(b_i,x_i)=(b'_i,x'_i)$ for every $i\ne k$; the record at slot $k$ on
the side where $b_k=0$ is never read and may be arbitrary. A randomized
training algorithm $\mathcal A$ with a public round count $T$ that maps a
prompt dataset to a weight trajectory $(\theta_2,\dots,\theta_{T+1})$ is
\emph{prompt-level $(\epsilon,\delta)$-DP} if~\eqref{eq:dp} holds with
$\mathcal M=\mathcal A$ for every pair of neighboring prompt datasets and
every measurable set $O$ of trajectories.
\end{definition}

Definition~\ref{def:prompt-dp} is Definition~\ref{def:dp} with the record
fixed to a prompt group and the output fixed to the whole released
trajectory. Section~\ref{sec:threat} states what the adversary sees,
Section~\ref{sec:method} gives the algorithm, and
Theorem~\ref{thm:privacy} proves that it is prompt-level
$(\epsilon,\delta_{q,\sigma,T}(\epsilon))$-DP.

\section{Threat Model}
\label{sec:threat}

\paragraph{Setting.}
We adopt the central model with a trusted curator, the convention of
DP-SGD~\cite{abadi2016deep,charles2025userlevel}. The curator holds a
prompt dataset $D$ over $n$ public slots (Definition~\ref{def:prompt-dp}),
with $n$ chosen without access to the data, and executes
Algorithm~\ref{alg:dpgrpo}; an absent slot contributes the zero vector to
every sum. The release is the full trajectory of LoRA weights
$\theta_2,\dots,\theta_{T+1}$ together with any checkpoint derived from
it. Neighbors keep the capacity $n$ and differ in one presence bit: the
standard DP-SGD relation with the record enlarged from one example to one
prompt group, under which Poisson sampling over $[n]$ reveals nothing
about the realized dataset size.

\paragraph{Protected event.}
The protected event is the presence of one prompt in $D$, the value of a
single bit $b_k$. Everything that prompt induces is protected with it: the
$G$ rollouts sampled from it in every round in which it is drawn, their
rewards and advantages, and its clipped gradient. The output is the whole
trajectory because a present prompt is resampled independently in every
round, so protecting its presence in one batch would not bound what the
trajectory reveals. Membership is the right event for RLVR because it is
what an adversary already recovers from an unprotected model: the policy
reads each prompt at every rollout, and membership auditing detects
prompt-specific traces in RLVR-trained
models~\cite{liu2026auditingdatamembershipreinforcement}. Concretely,
prompt-level $(\epsilon,\delta)$-DP bounds every test of $b_k$ that the
adversary below can run on the release: its true-positive rate is at most
$e^{\epsilon}$ times its false-positive rate plus
$\delta$~\cite{dong2022gaussian,balle2020hypothesis}. We report $\epsilon{=}8$, the operating point at which private
fine-tuning of language models is commonly
reported~\cite{li2022large,yu2022differentially}; the budget sweep of
Section~\ref{sec:design-stage1} measures what the noise of a tighter
budget costs.

\paragraph{Adversary.}
The adversary is white-box, adaptive, and computationally unbounded. It
knows every public input of Algorithm~\ref{alg:dpgrpo}: the base model,
the LoRA initialization $\theta_1$, the reference policy, the verifier,
the optimizer, the hyperparameters $(q,C,\sigma,T,G,\tau,\beta)$, the slot
count $n$, and the accountant. It knows every prompt in $D$ except the one
whose presence is in question, may inspect and run the weights, issue
unlimited queries, and combine the results with arbitrary side
information; each such use is post-processing
(Proposition~\ref{prop:post}) and leaves the guarantee unchanged. It does
not observe the run's internal randomness: the sampling coins of line~2,
the rollout randomness of line~4, and the Gaussian noise of line~8.
Recovery files that store these generator states are not part of the
release, and publishing them would void the guarantee.

\paragraph{Limits of the guarantee.}
Five limits remain, and each is a risk the proof does not remove.
(i)~\emph{The unit is a prompt, not a person.} If one person contributes
$k$ prompts, that person's guarantee follows only through group privacy
(Proposition~\ref{prop:group}) and degrades with $k$; protecting people
requires grouping and accounting at the user
boundary~\cite{charles2025userlevel}. (ii)~\emph{The theorem certifies the
analyzed mechanism under assumptions (A1)--(A4) of
Section~\ref{sec:privacy-guarantee}.} As in the private fine-tuning
literature~\cite{li2022large,yu2022differentially}, the budget covers the
reported run and not the pilot runs on which its hyperparameters were
selected; charging those would need a private tuning
analysis~\cite{papernot2022hyperparameter}. Two execution effects lie
outside the analysis: batched inference kernels and the training
verifier's wall-clock limit can couple one prompt's rollouts and rewards
to other records, and noise is generated in floating point rather than
over the reals~\cite{mironov2012significance,jin2022are}. Neither is
quantified; Appendix~\ref{app:assumptions} records the status of each
assumption.
(iii)~\emph{The bound is certified, not measured.} Every $\epsilon$ is
a privacy-loss-distribution budget with pessimistic discretization,
reproduced by an independent accountant and by R\'enyi accounting
(Section~\ref{sec:accounting}, Appendix~\ref{app:rdp}), so the guarantee
is as strong as the proof and the accounting make it. How much slack that
bound leaves in practice, which an attack such
as~\cite{liu2026auditingdatamembershipreinforcement} run against the
private model would measure, is left to future work
(Section~\ref{sec:limitations}). (iv)~\emph{The curator is trusted.} A
compromised curator, or a released copy of the recovery files, is outside
the model. (v)~\emph{Not every private-looking quantity is a protected
output.} The verifier is public code; rewards, completions, completion
lengths, per-record gradients and system telemetry are training-time
quantities that are never released and are not covered as outputs; and
the bound concerns what the weights reveal about membership, not what the
base model already knew about a prompt. The corpora of
Section~\ref{sec:experiments} are public benchmarks used as a mechanism
testbed; deployment on data whose subjects can be harmed requires a
privacy unit aligned with those subjects.

\section{The \ours{} Algorithm}
\label{sec:method}

\ours{} is DP-SGD with the prompt group as the clipped record and the
single-update GRPO loss as the per-record objective. Algorithm~\ref{alg:dpgrpo}
defines it, and Figure~\ref{fig:mechanism} shows one round.

\begin{algorithm}[t]
\caption{\ours{}: GRPO under prompt-level DP}
\label{alg:dpgrpo}
\begin{algorithmic}[1]
\REQUIRE Private prompt dataset $D=((b_i,x_i))_{i=1}^n$
(Definition~\ref{def:prompt-dp}). Public: LoRA initialization $\theta_1$, reference
policy $\pi_{\rm ref}$, verifier $r$, group size $G\ge2$, sampling
temperature $\tau$, advantage constant $c$, sampling rate $q$, clipping
norm $C$, noise multiplier $\sigma$, rounds $T$, KL weight $\beta$,
optimizer $\mathrm{Opt}$.
\ENSURE Weights $\theta_2,\dots,\theta_{T+1}$.
\STATE \textbf{for} $t=1,\dots,T$ \textbf{do}
\STATE \hspace{\algorithmicindent} $S_t\leftarrow\{i\in[n]:c_{t,i}=1\}$ with
  independent $c_{t,i}\sim\mathrm{Bernoulli}(q)$
\STATE \hspace{\algorithmicindent} \textbf{for each} $i\in S_t$ with $b_i=1$
  \textbf{do}
\STATE \hspace{2\algorithmicindent} sample
  $y_{i,1},\dots,y_{i,G}\sim\pi_{\theta_t}^{\tau}(\cdot\mid x_i)$ independently
\STATE \hspace{2\algorithmicindent} $r_{i,g}\leftarrow r(x_i,y_{i,g})$;\;
  $A_{i,g}\leftarrow(r_{i,g}-\mu_i)/(s_i+c)$ as in~\eqref{eq:adv}
\STATE \hspace{2\algorithmicindent} $g_i\leftarrow\nabla_\theta\ell_i(\theta_t)$
  with $\ell_i$ from~\eqref{eq:loss}
\STATE \hspace{2\algorithmicindent}
  $\bar g_i\leftarrow g_i\min\bigl(1,C/\lVert g_i\rVert_2\bigr)$
\STATE \hspace{\algorithmicindent} $z_t\sim\mathcal N(0,\sigma^2C^2I)$;\;
  $\widehat g_t\leftarrow\bigl(\sum_{i\in S_t}b_i\bar g_i+z_t\bigr)/(qn)$
\STATE \hspace{\algorithmicindent}
  $\theta_{t+1}\leftarrow\mathrm{Opt}(\theta_t,\widehat g_t)$
\end{algorithmic}
\end{algorithm}

\begin{figure*}[t]
\centering
\includegraphics[width=\textwidth]{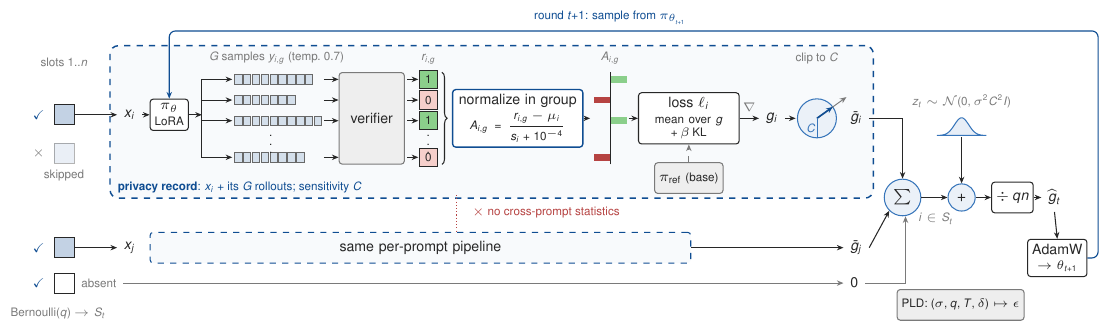}
\caption{One round of \ours{} (Algorithm~\ref{alg:dpgrpo}). Each
Poisson-sampled prompt $x_i$ (line~2) generates $G$ completions whose verifier
rewards are normalized within that group only (lines~4--5). The rollout losses
are combined into one prompt gradient $g_i$ and clipped to norm $C$
(lines~6--7), so the dashed region is a single privacy record. Clipped prompt
gradients are summed over the sample, perturbed with Gaussian noise, divided by
the public constant $qn$, and applied to the LoRA parameters (lines~8--9). The
next round samples rollouts from the updated policy.}
\label{fig:mechanism}
\end{figure*}

Line~6 uses the implemented single-update loss. With
$u_{i,g,k}=\log\pi_\theta(y_{i,g,k}\mid x_i,y_{i,g,<k})$, completion length
$L_{i,g}$, and the per-token KL estimate $K_{i,g,k}$ of~\eqref{eq:grpo},
\begin{equation}
\ell_i(\theta)=-\frac{1}{G}\sum_{g=1}^G\frac{1}{L_{i,g}}\sum_{k=1}^{L_{i,g}}
\left[e^{u_{i,g,k}-\operatorname{sg}(u_{i,g,k})}A_{i,g}
-\beta K_{i,g,k}\right],
\label{eq:loss}
\end{equation}
where $\operatorname{sg}$ stops gradients; advantages and reference
log-probabilities are detached. This is~\eqref{eq:grpo} with one update per
rollout batch, so the ratio equals one and the ratio clip is inactive;
Appendix~\ref{app:reproducibility} records the implementation. Line~4 samples
from $\pi_{\theta_t}$ at temperature $\tau=0.7$; $\pi^{\tau}_{\theta_t}$ is a
public function of $\theta_t$. In line~5, $s_i$ uses denominator $G-1$ and
$c=10^{-4}$; constant-reward groups are retained with zero advantages.
Rollouts are backpropagated in chunks that accumulate into $g_i$ before
line~7. Only LoRA coordinates are trainable. Line~8 runs even when $S_t$ is
empty, so $qn$ is a public constant, and division by it, like line~9, is
post-processing of the noisy sum.

The zero-advantage control sets $A_{i,g}=0$ in~\eqref{eq:loss}, leaving the KL term
and lines~7--9 intact. Its prompt contribution has the same norm bound, so
both arms use the same accountant parameters. This intervention tests whether
verifier-derived advantages contribute to utility after privatization.

\section{Privacy Analysis}
\label{sec:privacy-guarantee}

We now prove that Algorithm~\ref{alg:dpgrpo} satisfies
Definition~\ref{def:prompt-dp}, the guarantee the threat model of
Section~\ref{sec:threat} requires.

\subsection{Assumptions}

The guarantee holds under four assumptions about how
Algorithm~\ref{alg:dpgrpo} is run. The end of
Appendix~\ref{app:privacy-proof} records how each is enforced, tested, or
left as a disclosed gap.
\begin{itemize}
\item[(A1)] \emph{Public design.} Every public input of
Algorithm~\ref{alg:dpgrpo}, including all hyperparameters, is chosen without
access to $D$.
\item[(A2)] \emph{Record isolation.} Given $\theta_t$, the clipped gradients
$\bar g_i$ of line~7 are independent across prompts, and the law of
$\bar g_i$ depends only on $(\theta_t,x_i)$.
\item[(A3)] \emph{Fresh hidden randomness.} The coins of line~2, the rollout
randomness of line~4, and the noise of line~8 are drawn afresh in every
round, mutually independent, independent of $D$, and never released.
\item[(A4)] \emph{Exact arithmetic.} Lines~4--9 are computed over the reals,
and $z_t$ is an exact Gaussian draw.
\end{itemize}

\subsection{Guarantee}

Write $A=A_{q,\sigma}$ and $B=B_\sigma$ for the pair of
Definition~\ref{def:profile}.

\begin{lemma}[One round of \ours{}]
\label{lem:round-main}
Assume (A1)--(A4). Fix a round $t$, any values of the prior releases
$\widehat g_{1:t-1}$, and prompt datasets $D\simeq D'$ such that slot $k$
is present in $D$ and absent in $D'$. Let $M$ and $M'$ be the conditional
laws of $\widehat g_t$ under $D$ and $D'$. Then, for every $\alpha\ge0$,
\begin{equation}
H_\alpha(M\Vert M')\le H_\alpha(A\Vert B),
\qquad
H_\alpha(M'\Vert M)\le H_\alpha(B\Vert A).
\label{eq:round}
\end{equation}
\end{lemma}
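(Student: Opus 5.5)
The plan is to condition on everything the neighbors share and reduce one round to the subsampled Gaussian pair $(A,B)$ of Definition~\ref{def:profile}, using joint convexity of the hockey-stick divergence.

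\emph{Step 1: fix $\theta_t$.} Since $\mathrm{Opt}$ is public by (A1), $\theta_t$ is a deterministic function of $\theta_1$ and the prior releases $\widehat g_{1:t-1}$. By (A3), the round-$t$ coins, rollouts and noise are fresh and independent of the past. So conditioning on $\widehat g_{1:t-1}$ amounts to fixing $\theta_t$, and it is the same value under $D$ and $D'$.

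\emph{Step 2: write both laws as mixtures.} Put $R=\sum_{i\in S_t,\,i\ne k} b_i\bar g_i$. By (A2), given $\theta_t$, $R$ is independent of $\bar g_k$ and of $c_{t,k}$. Its law depends only on slots $i\ne k$, so it is identical under $D$ and $D'$. Call this law $\nu$. Let $\lambda$ be the law of $\bar g_k$ given $(\theta_t,x_k)$; by clipping, it is supported on the ball $\lVert v\rVert_2\le C$.

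Multiplying by the public constant $qn$ is a bijection, so by post-processing (Proposition~\ref{prop:post}, applied in both directions) we may study $qn\widehat g_t$ instead. This gives
\[
M=\int\!\!\int \bigl[(1-q)\,\mathcal N(r,\sigma^2C^2I)+q\,\mathcal N(r+v,\sigma^2C^2I)\bigr]\,d\lambda(v)\,d\nu(r),
\qquad
M'=\int \mathcal N(r,\sigma^2C^2I)\,d\nu(r).
\]

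\emph{Step 3: pointwise reduction.} Fix $(r,v)$ and set $P_{r,v}=(1-q)\mathcal N(r,\sigma^2C^2I)+q\mathcal N(r+v,\sigma^2C^2I)$ and $Q_r=\mathcal N(r,\sigma^2C^2I)$. Translating by $-r$, rotating $v$ onto the first axis, and dividing by $C$ are invertible maps. Hockey-stick divergence is invariant under them, so
\[
H_\alpha(P_{r,v}\Vert Q_r)=H_\alpha\bigl(A_{q,\sigma}^{(a)}\Vert B_\sigma\bigr),\qquad a=\lVert v\rVert_2/C\in[0,1],
\]
where $A^{(a)}$ denotes the mixture with mean shift $a$. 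The orthogonal coordinates are identical $\mathcal N(0,\sigma^2)$ factors under both measures and drop out.

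\emph{Step 4: monotonicity in the shift.} This is the step I expect to be the main obstacle. We need
\[
H_\alpha(A^{(a)}\Vert B)\le H_\alpha(A^{(1)}\Vert B)\quad\text{and}\quad H_\alpha(B\Vert A^{(a)})\le H_\alpha(B\Vert A^{(1)})\qquad\text{for }a\le1.
\]
I would first try a post-processing argument: build a Markov kernel that maps $\mathcal N(0,\sigma^2)$ to itself and $\mathcal N(1,\sigma^2)$ to $\mathcal N(a,\sigma^2)$ simultaneously. Such a kernel exists because the Gaussian location experiment with shift $1$ is Blackwell-more-informative than the one with shift $a$; the standard kernel is to add independent noise and rescale, $x\mapsto ax+\mathcal N(0,(1-a^2)\sigma^2)$. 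Applying it to the mixture and to $B$ gives the inequality in both directions by the data-processing inequality. If this fails, the fallback is to cite the dominating-pair results~\cite{zhu2022optimal,balle2018privacy}.

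\emph{Step 5: integrate out $(r,v)$.} $H_\alpha$ is jointly convex: it is a supremum over $O$ of expressions linear in $(P,Q)$. Apply this to the mixtures $M=\int P_{r,v}$ and $M'=\int Q_r$, taken over the common mixing law $\nu\otimes\lambda$, where $Q_r$ does not depend on $v$. Together with Steps 3 and 4 this yields $H_\alpha(M\Vert M')\le H_\alpha(A\Vert B)$, and symmetrically $H_\alpha(M'\Vert M)\le H_\alpha(B\Vert A)$. These are the two inequalities in~\eqref{eq:round}.

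Assumption (A4) is used so that the Gaussian densities, and hence the rotation and the kernel of Step 4, are exact.
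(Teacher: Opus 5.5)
Your proposal is correct and follows the paper's own proof. The steps match one for one: fix $\theta_t$ from the prior releases, write both conditional laws as mixtures over a common law of the other slots' sum and slot $k$'s clipped gradient, reduce each component pair by translation, rotation and scaling to $(A_a,B)$ with $a=\lVert v\rVert_2/C\le1$, degrade $(A,B)$ to $(A_a,B)$ with the same kernel $y\mapsto ay+\mathcal N(0,(1-a^2)\sigma^2)$, and finish with joint convexity. One minor point: the independence of the coin $c_{t,k}$ from $R$ and $\bar g_k$, and the fact that $\bar g_k$'s law does not depend on whether slot $k$ is sampled, come from (A3) rather than (A2). Also, the Step~4 kernel works exactly as you describe, so no fallback is needed.
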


Conditioned on everything released so far, one round thus reveals no more
about the prompt in slot $k$ than one DP-SGD step does about a worst-case
record, whatever $t$, the past releases, $G$, or $C$; in the terminology
of Zhu et al.~\cite{zhu2022optimal}, $(A,B)$ dominates the round.

\begin{theorem}[\ours{} is prompt-level DP]
\label{thm:privacy}
Assume (A1)--(A4). For every $\epsilon\ge0$, Algorithm~\ref{alg:dpgrpo}
is prompt-level $(\epsilon,\delta_{q,\sigma,T}(\epsilon))$-DP
(Definition~\ref{def:prompt-dp}).
\end{theorem}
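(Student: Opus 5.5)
The plan is to reduce the whole trajectory to the sequence of noisy gradients, apply Lemma~\ref{lem:round-main} round by round, and compose by adaptive composition of dominating pairs.

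\emph{Reduction to the gradient sequence.} Since the optimizer $\mathrm{Opt}$ and $\theta_1$ are public (A1) and line~9 is deterministic given $\theta_t$ and $\widehat g_t$, the trajectory $(\theta_2,\dots,\theta_{T+1})$ is a fixed function of $\widehat g_{1:T}$. By Proposition~\ref{prop:post} it therefore suffices to bound $H_{e^\epsilon}$ between the laws of $\widehat g_{1:T}$ under $D$ and $D'$, in both orders. By symmetry of the neighbor relation in Definition~\ref{def:prompt-dp}, we may take slot $k$ present in $D$ and absent in $D'$. The two orders then correspond to the two terms of \eqref{eq:profile}.

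\emph{Adaptive composition.} Write the joint law of $\widehat g_{1:T}$ as a product of conditionals $M_t(\cdot\mid\widehat g_{1:t-1})$. Here $\theta_t$ is a function of the prefix, and by (A3) the fresh coins, rollouts and noise of round $t$ are independent of the past given $\theta_t$. Lemma~\ref{lem:round-main} says that every conditional pair $(M_t,M'_t)$ is dominated by $(A,B)$, and likewise $(M'_t,M_t)$ by $(B,A)$, uniformly over the prefix. The composition theorem for dominating pairs of Zhu et al.~\cite{zhu2022optimal} then states that if each adaptively chosen round is dominated by a fixed pair, the $T$-fold composition is dominated by the product pair. This gives
\[
H_\alpha\bigl(\mathrm{Law}_D(\widehat g_{1:T})\Vert \mathrm{Law}_{D'}(\widehat g_{1:T})\bigr)\le H_\alpha(A^{\otimes T}\Vert B^{\otimes T})
\]
for all $\alpha\ge0$, and the symmetric bound with $B^{\otimes T}\Vert A^{\otimes T}$.

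\emph{Conclusion.} Set $\alpha=e^\epsilon$ and take the maximum over the two directions. This yields exactly $\delta_{q,\sigma,T}(\epsilon)$. Since $H_{e^\epsilon}$ is the smallest admissible $\delta$ in \eqref{eq:dp}, the algorithm satisfies Definition~\ref{def:prompt-dp} for every $\epsilon\ge0$.

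\emph{Main obstacle.} The delicate step is invoking composition with conditional domination. Zhu et al.'s result uses the fact that domination means $H_\alpha(M_t\Vert M'_t)\le H_\alpha(A\Vert B)$ for all $\alpha$. It then argues, via a tradeoff-function or hockey-stick convexity argument, that the product of dominated conditionals is dominated by the product of dominating pairs. The two hypotheses that argument needs are that the dominating pair is fixed, independent of the prefix and of $t$, and that the bound holds for every $\alpha$ rather than only at $e^\epsilon$. Lemma~\ref{lem:round-main} provides both, since $(A,B)$ does not depend on $t$, $G$, $C$ or $\widehat g_{1:t-1}$. I would also verify that the conditioning is measurable, i.e.\ that $M_t$ is a genuine Markov kernel in the prefix, which follows because $\theta_t$ is a measurable function of $\widehat g_{1:t-1}$ under (A4).
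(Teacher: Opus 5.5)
Your proposal is correct and follows the paper's proof essentially step for step. You fix the ordered pair with slot $k$ present in $D$, use the fact that $\theta_t$ is a public function of $\widehat g_{1:t-1}$ so that Lemma~\ref{lem:round-main} applies uniformly in the prefix, compose with the dominating-pair theorem of Zhu et al.\ (Lemma~\ref{lem:comp}) in both the $(A,B)$ and $(B,A)$ directions, and carry the bound from $\widehat g_{1:T}$ to $\theta_{2:T+1}$ by post-processing; the only differences are cosmetic (you invoke post-processing first rather than last, and you credit measurability of the prefix-to-$\theta_t$ map to (A4), whereas the paper obtains $\theta_t=\Phi_t(\widehat g_{1:t-1})$ from (A1)).
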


\begin{IEEEproof}
Definition~\ref{def:prompt-dp} requires~\eqref{eq:dp} on every ordered
pair of neighbors. Let $D\simeq D'$ with slot $k$ present in $D$ and
absent in $D'$; the reversed pair is handled by the same argument with
the roles of $A$ and $B$ exchanged. By (A1), $\theta_t$ is a fixed
public function of $\widehat g_{1:t-1}$ (line~9 iterated from $\theta_1$),
so round $t$, conditioned on the prior releases, is the mechanism of
Lemma~\ref{lem:round-main}, and~\eqref{eq:round} holds for every value of
$\widehat g_{1:t-1}$. The adaptive composition theorem for dominating
pairs~\cite{zhu2022optimal} (Lemma~\ref{lem:comp} in
Appendix~\ref{app:privacy-proof}) combines these per-round bounds into
\begin{equation*}
H_{e^\epsilon}\bigl(\mathcal L_D(\widehat g_{1:T})\,\big\Vert\,
\mathcal L_{D'}(\widehat g_{1:T})\bigr)
\le H_{e^\epsilon}(A^{\otimes T}\Vert B^{\otimes T})
\le \delta_{q,\sigma,T}(\epsilon),
\end{equation*}
and likewise with $D,D'$ and $A,B$ exchanged. By the characterization
of~\eqref{eq:dp} through $H_{e^\epsilon}$ (Section~\ref{sec:dpsgd}), the
releases $\widehat g_{1:T}$ therefore satisfy~\eqref{eq:dp} with
$\delta=\delta_{q,\sigma,T}(\epsilon)$ on every neighboring pair. The
trajectory $\theta_{2:T+1}$, the output named in
Definition~\ref{def:prompt-dp}, is computed from $\widehat g_{1:T}$ by the
public maps of line~9, which do not read $D$, so
Proposition~\ref{prop:post} carries the bound to it; with $T$ public by
(A1), this is exactly the statement of Definition~\ref{def:prompt-dp}
with $\delta=\delta_{q,\sigma,T}(\epsilon)$.
\end{IEEEproof}

Lemma~\ref{lem:round-main} is proved in Appendix~\ref{app:privacy-proof},
together with the two corollaries cited below. Any checkpoint selection
that reads only the released weights and public data is further
post-processing and inherits the guarantee. Three consequences of
Theorem~\ref{thm:privacy} organize the rest of the paper. First,
$\delta_{q,\sigma,T}$ involves neither the group size $G$, the clipping
norm $C$, the loss, nor the verifier, so $G$ is a compute and utility
parameter rather than a privacy cost. Second, because the proof reads
lines~4--6 only through (A2)--(A4), any training procedure that keeps
lines~2--3 and 7--9 and replaces lines~4--6 by a per-record computation
inherits the same guarantee at the same $(q,\sigma,T)$; the
verifier-filtered DP-SFT and gold-SFT comparators of
Section~\ref{sec:results} are such procedures
(Corollary~\ref{cor:baselines}), so equal $\epsilon$ across arms means an
identical mechanism, with the caveat that gold-SFT's record carries the
full reference solution rather than a final answer. Third,
Definition~\ref{def:prompt-dp} uses add/remove neighbors; replacing one
prompt costs $(2\epsilon,(1+e^{\epsilon})\delta)$ by group privacy
(Corollary~\ref{cor:replace}).

\subsection{Accounting}
\label{sec:accounting}
Every privacy budget in this paper is the profile $\delta_{q,\sigma,T}$ of
Definition~\ref{def:profile}, evaluated by privacy-loss-distribution (PLD)
accounting~\cite{sommer2019privacy,koskela2020computing} with the
implementation in Google's differential-privacy library~\cite{googledp}
and the add-or-remove neighboring relation. The accountant discretizes the
privacy loss by the connect-the-dots
construction~\cite{doroshenko2022connect} with pessimistic rounding that
assigns truncated tail mass to infinite loss, so every reported $\epsilon$
upper-bounds the budget of the ideal mechanism. Given a slot universe of
size $n$, we set $\delta=1/(10n)$ and
$q=m/n$ for a chosen expected sample size $m$, fix the round count $T$, and
take $\sigma$ to be the smallest noise multiplier, found by bisection, at
which the accountant returns the target $\epsilon$. Two tasks with the same
$(n,m,T)$ and target therefore receive the same $\sigma$ and the same
certified budget, whatever their prompts, verifier, or loss. For every
configuration used in this paper the accountant returns its target
$\epsilon$ to at least four decimal places; Table~\ref{tab:rdp} in Appendix~\ref{app:privacy-proof} lists
each configuration with its certified budget and, for readers who reproduce
our numbers with an RDP accountant, the corresponding R\'enyi-DP value. The
implementation reproduces the published JAX-Privacy~\cite{balle2025jaxprivacy} reference value to
within $3.1\times10^{-4}$ (Appendix~\ref{app:mechanism-verification}).

\section{Experimental Design}
\label{sec:experiments}

The experiments ask whether verifier feedback improves a policy trained
under prompt-level differential privacy, whether private imitation at the
same budget matches it, and how much of that comparison depends on answer
scoring and on training choices. Within a benchmark, every compared arm
spends the same per-run privacy budget on the same prompts. A \emph{design
study} (Sec.~\ref{sec:design-stage1}) on MATH fixed the operating point and
the supervised recipe; a \emph{fixed protocol}
(Sec.~\ref{sec:design-stage2}) then ran five arms on MATH and GSM8K, and on
CommonsenseQA as an exploratory extension. MATH is therefore the
development benchmark, whose fixed-protocol results replicate the design
study, and GSM8K is the confirmatory one. Table~\ref{tab:arms}
(Appendix~\ref{app:historical-results}) lists every arm, its seeds, and
where it is reported, and Table~\ref{tab:protocol-timeline}
(Appendix~\ref{app:protocol-timeline}) lists every protocol decision and
the runs it preceded or followed.

\subsection{Common Setup}
\label{sec:design-setup}

\paragraph{Benchmarks.}
A benchmark supplies $n$ training prompts, a deterministic verifier
$r(x,y)\in\{0,1\}$ that checks a response's final answer against the
prompt's reference answer, and a disjoint evaluation set scored by the
same verifier. The prompt with its reference answer is the private record
(Sec.~\ref{sec:record}); the verifier is public.
MATH~\cite{hendrycks2021math} and GSM8K~\cite{cobbe2021gsm8k} are the main
benchmarks. CommonsenseQA~\cite{talmor2019commonsenseqa}, five-way multiple
choice, tests the comparison outside mathematics: its 6,961 training
questions are sampled so that $n$, and with it every mechanism parameter,
equals GSM8K's, and it is evaluated on the 1,221-question validation split
because test labels are not public. Its verifier reads one option letter
from an ``Answer: $\langle$letter$\rangle$'' declaration, a format adopted
after a pilot with a stricter answer marker failed.
SVAMP~\cite{patel2021svamp} is used only to evaluate GSM8K checkpoints.

\paragraph{Model and training.}
Every arm fine-tunes rank-4 LoRA adapters on Qwen2.5-1.5B-Instruct
(configuration in Appendix~\ref{app:config-ledger}). GRPO draws $G{=}8$
completions per prompt at temperature $0.7$ with KL weight $\beta{=}0.04$.
The learning rate is $3\times10^{-4}$ for \ours{} and the zero-advantage
control, chosen by a pilot on MATH and carried unchanged to the other
benchmarks; the supervised arms use the learning rates and clipping norm
of their own recipes, selected as described in
Appendix~\ref{app:config-pilots}. Each run is one $(\epsilon,\delta)$-DP
release of the LoRA weights, with $\sigma$ set by bisection so that $T$
rounds compose to the target $\epsilon$ (Sec.~\ref{sec:accounting});
Table~\ref{tab:rdp} certifies the budget of every configuration. The
budget is per run; pilot tuning and any released randomness are outside
it (Sec.~\ref{sec:threat}).

\paragraph{Shared and per-benchmark settings.}
Every benchmark uses $\epsilon{=}8$ at $\delta{=}1/(10n)$, $qn{=}128$
expected prompts per round, $G{=}8$, and $C{=}0.055$ for \ours{} and its
control (the supervised arms' norm is in
Appendix~\ref{app:config-params}). Within a benchmark all arms share $T$,
$q$, $\sigma$ and the training prompts, and may differ in $C$ without
changing $\epsilon$ because the noise scales with $C$
(Corollary~\ref{cor:baselines}); for the same reason, arms with
different $C$ also share the ratio of noise to the largest clipped
signal, and $C$ sets only how much of each record's gradient survives
clipping. Table~\ref{tab:config} lists what
differs: the horizon ($T{=}50$ on MATH, $T{=}100$ on the other two), the
$\sigma$ it implies, and a completion cap matched to solution length;
these are the parameters every result in Section~\ref{sec:results}
refers to. The horizons are short by design: at a fixed $\epsilon$ more
rounds require a larger $\sigma$, and 50 to 100 rounds keep $\sigma$
near 0.5 while spending the budget on under two expected passes over
the prompts ($qT$ in Table~\ref{tab:config}).
GSM8K's protocol, including its horizon, was fixed before any private
GSM8K run. We set $\delta{=}1/(10n)$, an order of magnitude below $1/n$.

\begin{table}[t]
\centering
\caption{Per-benchmark settings of the private runs. Shared by all
three: $\epsilon{=}8$ at $\delta{=}1/(10n)$, $qn{=}128$, $G{=}8$,
$C{=}0.055$ (\ours{}, control), learning rate $3\times10^{-4}$.
Full-precision values, the supervised arms' recipe parameters and split
construction are in Appendix~\ref{app:config-ledger}; certified budgets
in Table~\ref{tab:rdp}.}
\label{tab:config}
\footnotesize
\setlength{\tabcolsep}{3pt}
\begin{tabular}{@{}lrrr@{}}
\toprule
 & MATH & GSM8K & CSQA \\
\midrule
Role                     & development & confirmatory & exploratory \\
Training prompts $n$     & 6{,}988 & 6{,}961 & 6{,}961 \\
Evaluation set           & 4{,}499 test & 1{,}319 test & 1{,}221 val. \\
Rounds $T$               & 50 & 100 & 100 \\
Expected passes $qT$     & 0.92 & 1.84 & 1.84 \\
Noise multiplier $\sigma$ & 0.484 & 0.520 & 0.520 \\
Completion cap (tokens)  & 1{,}024 & 512 & 512 \\
Strict scorer            & boxed answer & final number & option letter \\
\bottomrule
\end{tabular}
\end{table}

\subsection{Stage 1: Design Study}
\label{sec:design-stage1}

MATH was the development benchmark, and every design run used two
initialization seeds per setting. The \emph{budget sweep} sets
$\epsilon\in\{1,4.23,8\}$ at fixed $T$ by changing $\sigma$ alone; these
runs come from the earlier design study and measure the utility of the
procedure at each noise level (Appendix~\ref{app:config-shared}). The
\emph{clipping sweep} sets
$C\in\{\tfrac12,1,2\}\times0.055$ at fixed $\epsilon$, which moves clipping
bias and absolute noise together. The \emph{allocation study} replaces
eight rollouts on 128 expected prompts per round with two rollouts on 512
(the same rollout count) or on 128 (a quarter of it); each change also
changes $q$ and $\sigma$. Two \emph{supervised challenges} run filtered
DP-SFT at \ours{}'s learning rate (same-LR) and at $T{=}100$ with $\sigma$
recalibrated to keep $\epsilon{=}8$ (100-update). Finally,
\emph{non-private references} remove the noise ($\sigma{=}0$) at the private
hyperparameters, with clipping kept on MATH, kept or disabled on
GSM8K, and disabled on CommonsenseQA; they make no privacy claim, and the
GSM8K references, as the one exception to final-step evaluation, are
scored at steps 25, 50 and 100. A two-seed clipped MATH reference and a
single-seed unclipped GSM8K reference, both at learning rate $10^{-4}$,
test a smaller step. Appendix~\ref{sec:res-budget} reports the
budget, clipping and non-private results and Section~\ref{sec:res-c2} the
supervised challenges.

\subsection{Stage 2: Fixed Protocol}
\label{sec:design-stage2}
\label{sec:design-arms}

Five arms run on each benchmark at that benchmark's settings.
The \emph{public base} is the untuned model. \emph{\ours{}} is
Algorithm~\ref{alg:dpgrpo}. The \emph{zero-advantage control} runs the
identical private pipeline (generation, verification, KL term, clipping,
noise and optimizer updates) but sets every advantage in
Eq.~\eqref{eq:adv} to zero before the loss, so correct responses receive
no preference over incorrect ones. It is a trained control, not the public
base; it shares \ours{}'s procedure and hyperparameters but not its
realized noise or rollouts, and the difference between the two measures
the reward-advantage term. The LoRA update starts at zero (PEFT's default
initialization of $B$), so the control starts at the reference policy,
where the KL gradient vanishes: its first update is pure noise, and later
updates add only the KL pull toward the base. The control therefore answers the question the private setting raises
first: whether the private pipeline itself, its noise, clipping and KL
pull, moves the model without a reward signal, so that a gain of \ours{}
over the base can be attributed to the reward-advantage term rather than
to private training as such. A control close to the base is the expected
outcome, and the contrast with \ours{} is then the reward effect as a
whole; which part of the advantage carries it is the finer question that
the advantage ladder of Section~\ref{sec:limitations} would take up.

Two supervised arms train the same LoRA with prompt-level DP-SGD at the
same $q$, $\sigma$ and $T$, hence the same $\epsilon$, following
\cite{li2022large,yu2022differentially}, without a KL term.
\emph{Verifier-filtered DP-SFT} adapts rejection-sampling fine-tuning
\cite{yuan2023scaling}: it samples $G{=}8$ public-base completions per
prompt, keeps those the verifier accepts, and minimizes their
negative log-likelihood averaged within the prompt before clipping; a
prompt with no accepted completion contributes a zero gradient.
\emph{Gold DP-SFT} trains on the reference target, which \ours{} never
sees: a worked solution on MATH and GSM8K, and the four-token declaration
``Answer: $\langle$gold letter$\rangle$'' on CommonsenseQA.

\paragraph{Seeds.}
Every arm has two initialization seeds per benchmark. On MATH and GSM8K,
\ours{} and the control have two more, added after the two-seed results
had been inspected, so their four-seed contrasts are sequential,
exploratory extensions. Supervised contrasts pair the two SFT seeds with
\ours{} seeds 1--2, which share their initializations. Per-seed values
are in Appendix~\ref{app:per-seed}.

\subsection{Endpoints and Scoring}
\label{sec:design-endpoints}

The \emph{primary endpoint} is strict accuracy of one greedy completion
per evaluation problem at the final training step, with no checkpoint or
problem selection and a scorer frozen before the arms were run. Strict
scoring credits a correct extracted answer only: a boxed answer checked by
symbolic equality on MATH, a final number on GSM8K, an option letter on
CommonsenseQA. A response with no extraction counts as incorrect, and every
problem stays in the denominator. MATH is scored on 4,499 test problems,
excluding those that earlier runs used for checkpoint selection
(Appendix~\ref{app:config-ledger}); GSM8K on its full test split.

Sensitivity analyses rescore the same saved responses and change no
training reward. On MATH, the \emph{format-robust} endpoint also credits a
completed unboxed final answer under the same symbolic check; responses
cut at the cap stay ineligible, and the rules were audited before
arm-level rescoring (Appendix~\ref{app:format-robust}). On CommonsenseQA,
a \emph{final-answer} rescore reads the last explicit answer declaration
and abstains on ambiguous ones. It was chosen after the strict scores of
four arms had been inspected, then hash-frozen before their rescoring and
before filtered-SFT evaluation, so it is a post-hoc sensitivity endpoint
(Appendix~\ref{app:csqa-extraction}). GSM8K's scorer already credits an
unboxed final number, so it has no separate format-robust endpoint. For
transfer, the final GSM8K
checkpoints are scored on SVAMP's 1,000 problems with GSM8K's prompt and
cap (Appendix~\ref{app:svamp}). Every compared arm, the base included, is
scored in the same evaluation session, and accuracies are never compared
across benchmarks.

\subsection{Statistics}
\label{sec:design-stats}

A contrast is the paired difference in accuracy between two arms on the
shared evaluation set, in percentage points (pp). Brackets give a 95\%
problem-cluster bootstrap interval over the evaluation set (100{,}000
draws; each arm's seeds averaged within a problem); the four-seed
contrasts additionally report the standard deviation of the per-seed
differences and a seed-level $t$ interval. Each
contrast also carries exact McNemar tests on each seed pair,
Holm-adjusted within its family, and a sign-randomization $p$-value for
the pooled difference. A family is the set of seed-pair tests of one
contrast on MATH and GSM8K, and the eight tests of the final-answer
rescore on CommonsenseQA, where the rescore was fixed after the arms had
been seen. Beside every contrast whose interval includes zero we give the
two-sided 80\%-power minimum detectable effect (MDE), so such a result
reads as a bound on the effect rather than as its absence.
\FloatBarrier 

\section{Results}
\label{sec:results}

MATH and GSM8K provide the main evidence; CommonsenseQA tests a
non-mathematical setting, and SVAMP tests transfer without further
training. Fig.~\ref{fig:results} places every arm against the public base,
and Table~\ref{tab:contrasts} gives \ours{}'s gain over each comparator
(positive values favor \ours{}). Strict scoring is primary throughout;
arms, scorers, intervals and MDEs are as defined in
Section~\ref{sec:experiments}, every private arm runs at the parameters
of Table~\ref{tab:config}, and its certified budget is in
Table~\ref{tab:rdp}.

\begin{figure*}[t]
\centering
\includegraphics[width=\textwidth]{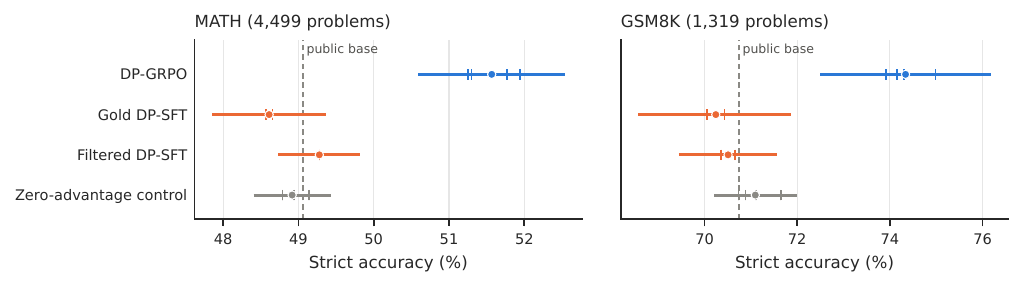}
\caption{Strict accuracy of each arm at $\epsilon{=}8$. Dot: mean over
seeds; ticks: individual seeds; dashed line: public base. Each bar is the
95\% paired interval of the arm's difference from the base, added to the
base value, so a bar that clears the dashed line is a resolved gain and
one that crosses it is not. Exact values in Appendix~\ref{app:per-seed}.}
\label{fig:results}
\end{figure*}

\subsection{Reward Learning Survives Prompt-Level Privacy}
\label{sec:res-c1}
\begin{table}[t]
\centering
\caption{\textbf{\ours{}} accuracy gain over each comparator.}
\label{tab:contrasts}
\footnotesize
\setlength{\tabcolsep}{4pt}
\begin{tabular}{@{}lll@{}}
\toprule
Comparator & MATH (pp) & GSM8K (pp) \\
\midrule
Zero-adv.\ control & $+2.65$ [$1.79$, $3.51$] & $+3.24$ [$1.61$, $4.89$] \\
Filtered DP-SFT & $+2.32$ [$1.39$, $3.26$] & $+3.53$ [$1.74$, $5.31$] \\
Gold DP-SFT     & $+2.99$ [$2.01$, $3.98$] & $+3.79$ [$1.97$, $5.57$] \\
\bottomrule
\end{tabular}
\par\smallskip
\raggedright\footnotesize
Values are strict-accuracy differences in percentage points (pp), with
95\% paired CIs. Positive values favor \ours{}. The control row pools
four \ours{} and four control seeds, two of each added sequentially
(Sec.~\ref{sec:design-arms}); the supervised rows pair two seeds each.
\end{table}

At the per-run $\epsilon{=}8$ operating point, \ours{} improves strict
accuracy over the public base by $+2.51$ [$1.54$, $3.48$] pp on MATH and
$+3.60$ [$1.76$, $5.44$] on GSM8K, reaching four-seed means of 51.6\% and
74.3\% from bases of 49.1\% and 70.7\%. The zero-advantage control, which
trains through the same private pipeline without any reward preference,
stays at the base, $-0.14$ [$-0.65$, $0.37$] and $+0.36$ [$-0.53$,
$1.27$] with MDEs 0.73 and 1.29, as expected for a pipeline whose only
remaining signal is the KL pull toward the base
(Section~\ref{sec:design-arms}): noise and clipping alone neither help
nor harm at this budget. \ours{} exceeds it by
$+2.65$ [$1.79$, $3.51$] on MATH and $+3.24$ [$1.61$, $4.89$] on GSM8K, so
within this pipeline the gain is attributable to the reward-advantage
term rather than to private training as such.

The effect is consistent across training runs. Every seed-level reward
contrast is positive on both benchmarks, and every seed-level McNemar
test rejects equality after Holm adjustment, the largest adjusted $p$
being 0.013 on GSM8K; the sign-randomization $p$ for the pooled contrast
is below $10^{-3}$ on both. The per-seed differences have a standard
deviation of 0.36 pp on MATH and 0.72 on GSM8K, and seed-level $t$
intervals of [2.08, 3.22] and [2.10, 4.38], so the reward effect is
seven and four times its seed-to-seed spread. These pools include the two
sequentially added seeds (Sec.~\ref{sec:design-arms}); per-seed contrasts
and tests are in Appendix~\ref{app:per-seed}.

\subsection{Comparison with Private Supervised Fine-Tuning}
\label{sec:res-c2}

\ours{} outperforms both private imitation procedures on MATH and GSM8K
(Table~\ref{tab:contrasts}). All eight seed-paired supervised contrasts,
two seed pairs for each of the four task-by-target comparisons, favor it
and are significant after Holm adjustment, with pooled gains from 2.32 to
3.79 pp. The supervised arms themselves stay within their MDEs of
the base. Filtered DP-SFT changes MATH by $+0.22$ [$-0.32$, $0.77$] and
GSM8K by $-0.23$ [$-1.29$, $0.83$] (MDEs 0.78 and 1.50); gold DP-SFT,
despite training on the worked reference solutions as token-level
targets, which \ours{} touches only through the verifier's answer check,
changes
them by $-0.44$ [$-1.20$, $0.31$] and $-0.49$ [$-2.16$, $1.14$] (MDEs
1.08 and 2.36). The same supervised pipeline does learn when its targets are short: on
CommonsenseQA, gold DP-SFT gains $+4.95$ pp over the base and matches
\ours{} (Section~\ref{sec:res-csqa}). The flat mathematical results
therefore reflect what these recipes extract from long worked solutions
under this noise, not a pipeline that fails to train;
Section~\ref{sec:discussion} discusses why. These results compare specified procedures at a matched per-run
budget and guarantee (recipe selection in
Appendix~\ref{app:config-pilots}); they concern these recipes rather than
private imitation in general.

Two MATH challenges test whether the gap closes at \ours{}'s learning
rate or with twice as many updates at recalibrated noise. Neither does:
\ours{} remains $+2.10$ [$1.14$, $3.07$] above the same-LR arm and
$+1.78$ [$0.88$, $2.67$] above the 100-update arm. The longer run is the
only supervised change with a resolved improvement, $+0.54$ [$0.03$,
$1.06$] over the selected recipe (Table~\ref{tab:design}).

\subsection{Answer Formatting and Correctness}
\label{sec:res-c3a}
\begin{table}[t]
\centering
\caption{MATH accuracy under two answer extractors.}
\label{tab:robust}
\footnotesize
\setlength{\tabcolsep}{4pt}
\begin{tabular}{@{}lrr@{}}
\toprule
Model & Strict (\%) & Format-robust (\%) \\
\midrule
Public base     & $49.06$ & $50.37$ \\
Zero-adv.\ control & $48.92\;(-0.14)$ & $50.28\;(-0.08)$ \\
Filtered DP-SFT & $49.28\;(+0.22)$ & $50.47\;(+0.10)$ \\
Gold DP-SFT     & $48.61\;(-0.44)$ & $49.98\;(-0.39)$ \\
\ours{}         & $51.57\;(+2.51)$ & $51.62\;(+1.26)$ \\
\midrule
Non-private GRPO$^{\dagger}$ & $52.12\;(+2.96)$ & -- \\
\bottomrule
\end{tabular}
\par\smallskip
\raggedright\footnotesize
Parentheses give the difference from the public base in percentage points
on 4,499 problems. Both extractors score the same saved responses.
$^{\dagger}$Earlier run of standard GRPO, one seed: LoRA rank 16,
LR $5\times10^{-6}$, checkpoint 900, with its gain measured against its
own base (49.17\%; Appendix~\ref{app:config-shared}); later checkpoints score 51.4--52.7\%. It was not
format-robustly rescored.
\end{table}

\begin{table}[t]
\centering
\caption{GSM8K accuracy.}
\label{tab:gsm8k}
\footnotesize
\setlength{\tabcolsep}{4pt}
\begin{tabular}{@{}lr@{}}
\toprule
Model & Accuracy (\%) \\
\midrule
Public base & $70.74$ \\
Zero-adv.\ control & $71.10\;(+0.36)$ \\
Filtered DP-SFT & $70.51\;(-0.23)$ \\
Gold DP-SFT & $70.24\;(-0.49)$ \\
\ours{} & $74.34\;(+3.60)$ \\
\midrule
Non-private GRPO$^{\dagger}$ & $75.36\;(+4.02)$ \\
\bottomrule
\end{tabular}
\par\smallskip
\raggedright\footnotesize
Parentheses give the difference from the public base in percentage points
on 1,319 questions. The private arms use 100 updates;
\ours{} and the zero-advantage control average four seeds, the
supervised arms two.
$^{\dagger}$Earlier run of standard GRPO, one seed: LoRA rank 16,
LR $5\times10^{-6}$, two epochs (1{,}740 updates). Its gain is measured
against its own base (71.34\%; Appendix~\ref{app:config-shared}).
\end{table}

Two robustness checks separate answer format and response length from
correctness. Tables~\ref{tab:robust} and~\ref{tab:gsm8k} give the MATH and
GSM8K accuracies; their non-private rows are discussed in
Section~\ref{sec:res-nonprivate}.

The reward effect survives the format-robust scorer. The reward contrast
is $+1.34$ [$0.54$, $2.14$] pp, positive in all four seeds, and the gain
over the base $+1.26$ [$0.33$, $2.17$]. Both are smaller than under strict
scoring because the rescore lifts the base from 49.06\% to 50.37\% and the
control from 48.92\% to 50.28\%, while \ours{}, whose correct answers are
nearly all boxed already, moves only to 51.62\%.
Table~\ref{tab:math-format-decomposition} (Appendix~\ref{app:format-robust})
locates the gain. On the 3,831 problems the base already answers in the
required form, \ours{} changes accuracy by $-0.24$ pp; on the 668
problems the base leaves without a boxed answer it adds $+1.49$ pp under
format-robust scoring and $+2.75$ under strict. The reward signal thus
teaches the model to produce a checkable answer on problems where the
base produces none, while leaving the problems the base already answers
essentially unchanged. \ours{} also stays resolved above
selected-LR filtered and gold DP-SFT, by $+1.19$ [$0.30$, $2.08$] and
$+1.68$ [$0.74$, $2.61$]; against the 100-update challenge the gap is
$+0.78$ [$-0.07$, $1.62$], MDE 1.22, so that comparison depends on the
scorer.

GSM8K needs no separate rescoring: its scorer already credits an
unboxed final number, so its strict endpoint is format-robust by
construction, and its reward contrast, $+3.24$ [$1.61$, $4.89$] pp in
every seed, is the format-independent evidence for C1. Nor does the MATH
gain come from shorter responses: \ours{}'s responses average 554 tokens
against the base's 540.

\subsection{Outside Mathematics: CommonsenseQA}
\label{sec:res-csqa}
\begin{table}[t]
\centering
\caption{CommonsenseQA accuracy under two answer extractors.}
\label{tab:csqa}
\footnotesize
\setlength{\tabcolsep}{4pt}
\begin{tabular}{@{}lrr@{}}
\toprule
Model & Strict (\%) & Final answer (\%) \\
\midrule
Public base      & $46.19$ & $66.75$ \\
Zero-adv.\ control & $45.99\;(-0.20)$ & $67.20\;(+0.45)$ \\
Filtered DP-SFT  & $52.42\;(+6.22)$ & $66.71\;(-0.04)$ \\
Gold DP-SFT      & $71.70\;(+25.51)$ & $71.70\;(+4.95)$ \\
\ours{}        & $70.84\;(+24.65)$ & $71.46\;(+4.71)$ \\
\midrule
Non-private GRPO & $77.27\;(+31.08)$ & $77.27\;(+10.52)$ \\
\bottomrule
\end{tabular}
\par\smallskip
\raggedright\footnotesize
Parentheses give the difference from the public base in percentage points.
Both extractors score the same saved responses on 1,221 questions.
The non-private reference keeps \ours{}'s hyperparameters and seeds and
sets $\sigma=0$ without clipping.
\end{table}

On CommonsenseQA the strict gain over base, $+24.7$ [$22.1$, $27.3$] pp,
comes mostly from extraction: the strict extractor recognizes only 64\%
of base responses, partly because discussion of several options reads as
conflicting answers. Under the final-answer rescore
(Table~\ref{tab:csqa}), a resolved reward effect remains. \ours{} reaches
71.5\% against 66.7\% for base and 67.2\% for the control, with gaps of
$+4.71$ [$2.87$, $6.59$] over base, $+4.26$ [$2.50$, $6.06$] over the
control, and $+4.75$ [$2.99$, $6.55$] over filtered DP-SFT, each with
Holm-adjusted $p<0.001$ within the eight-test family. Even crediting all
21 remaining base abstentions as correct leaves a descriptive 2.99-point
gap below \ours{}; this bound does not cover errors in accepted answers
(Appendix~\ref{app:csqa-extraction}). Gold DP-SFT, whose targets are
four-token answer declarations, reaches 71.7\%, and the \ours{}$-$gold gap
is $-0.25$ [$-1.56$, $1.02$]. The reward effect thus carries beyond
mathematics. That gold-label SFT matches it here is what the target
structure predicts: a four-token declaration supervises exactly the
scored answer, so imitation and reward optimization have the same thing
to learn from each record, whereas a mathematical reference solution
spends most of its tokens on the derivation, so under the same clipped
and noised gradient the answer receives a smaller share of the signal
(Section~\ref{sec:discussion}).

\subsection{Comparison with Non-Private Training}
\label{sec:res-nonprivate}

Each benchmark table ends with one non-private GRPO run. On MATH and
GSM8K it is standard GRPO without privacy, scored against its own base
(configuration in Appendix~\ref{app:config-shared}); on CommonsenseQA it
keeps \ours{}'s hyperparameters and seeds and removes the noise and the
clipping. Measured against each run's own base, \ours{} retains most of
the non-private gain on the mathematical tasks: $+2.51$ against $+2.96$
pp on MATH (85\%) and $+3.60$ against $+4.02$ on GSM8K (90\%). On
CommonsenseQA it retains 45\%, $+4.71$ against $+10.52$ under the
final-answer rescore. Only this last comparison is paired: the
non-private run exceeds \ours{} by $+6.43$ [$4.67$, $8.23$] pp strict and
$+5.81$ [$4.10$, $7.58$] under the rescore, a resolved privacy cost,
though measured at the private hyperparameters rather than against a
tuned ceiling. The mathematical-task ratios are descriptive and carry no interval;
only the CommonsenseQA comparison is paired.

\subsection{Supporting Studies}
\label{sec:res-supporting}

On MATH, accuracy is insensitive to the budget over the tested range:
moving from $\epsilon{=}8$ to $\epsilon{=}1$ changes it by $-0.57$
[$-1.19$, $0.06$] pp, so an eightfold tighter guarantee costs at most 1.2
points, and halving or doubling the clipping norm changes it by less than
its MDE (Appendix~\ref{sec:res-budget}). The $\epsilon{=}1$ run doubles the
noise multiplier and still reaches the same accuracy, which fits the
account in Section~\ref{sec:discussion}: the attainable gain on MATH is
small, and \ours{} reaches most of it under either noise level.
Drawing two rollouts per prompt instead of eight cuts measured training
time 3.2$\times$ at a strict-accuracy cost of 0.96 [0.18, 1.74] pp, about
half of which is answer formatting, since the cost falls to 0.49
[$-0.27$, $1.26$] under format-robust scoring
(Appendix~\ref{sec:res-alloc}). Transfer of the GSM8K checkpoints to
SVAMP without further training gains $+1.20$ [$-0.53$, $2.95$] pp over
base, positive in all four seeds; with 1,000 problems the MDE is 2.49 pp,
so a transfer gain of GSM8K's size would have been detected and one of a
third of that size, which the point estimate shows, would not
(Appendix~\ref{app:svamp}).

\section{Discussion and Limitations}
\label{sec:limitations}
\label{sec:discussion}

\paragraph{When imitation is effective.}
The supervised comparisons fit one account. Filtered imitation can only
reinforce prompts the public base already answers correctly, whereas
reward optimization exploits contrasts among responses from the evolving
policy, including prompts the base answers inconsistently. Gold imitation
supplies a target for every prompt, but what that target supervises
decides how much of the clipped, noised gradient reaches the scored
answer: CommonsenseQA's four-token declarations supervise the answer and
nothing else, so gold DP-SFT matches \ours{} there, while a mathematical
reference solution spends most of its tokens on the derivation, so the
answer receives a small share of a gradient whose norm is capped at $C$.
A target-length study at fixed answer supervision would measure this
share directly.

\paragraph{When privacy costs accuracy.}
The cost of privacy appears to track how much a task leaves to learn
(Section~\ref{sec:res-nonprivate}). On MATH and GSM8K the non-private
runs gain only three to four points over the base, and under its recipe
the MATH run stays between 51.4\% and 52.7\% from 900 to 3{,}494
updates, so training longer without privacy does not raise that target.
\ours{} reaches most of this small gain within 50 or 100 noisy updates.
On CommonsenseQA the non-private run gains 10.5 points under the
final-answer rescore, and \ours{} captures less than half of it in 100
updates. A plausible mechanism is the noise. On MATH and GSM8K its norm is
eight to nine times the largest possible clipped signal
(Appendix~\ref{app:config-shared}), and under AdamW it shortens each step
along the signal. Shorter steps cost little when the attainable gain is
small, as on the mathematical tasks, and more when there is more to
learn, as on CommonsenseQA, whose answers of about 100 tokens never reach
the cap. Our experiments do not
isolate this explanation.

The small non-private gains also mark a limit of our setting. Every run
uses one 1.5B-parameter model, and its strength likely bounds what reward
learning of any kind, private or not, can add on these benchmarks. A
stronger base model may have more to gain, and whether privacy then costs
more, as on CommonsenseQA, or stays small is untested. Within this bound
the method is useful: under a prompt-level guarantee, \ours{} keeps most
of the non-private gain on both mathematical tasks, varies little across
seeds, and improves on the matched control on all three tasks.

\paragraph{Scope and inference.}
All results come from one model, Qwen2.5-1.5B-Instruct with rank-4 LoRA,
whose strength may bound the attainable gains (see above).
The evidence for C1 and C2 comes from two mathematical-reasoning
benchmarks whose verifiers check a final answer; CommonsenseQA serves
only to check that the reward effect carries beyond mathematics
(Sec.~\ref{sec:res-csqa}). The design study ran on MATH, with only non-private references on
GSM8K and CommonsenseQA, so the privacy-budget, clipping and allocation results describe one
benchmark. Transfer was checked once, by scoring the GSM8K checkpoints
without further training on SVAMP~\cite{patel2021svamp}
(Appendix~\ref{app:svamp}): \ours{} improves on the base there by a
third of its GSM8K gain, within the interval over problems, and the
advantage over DP-SFT is not detectable, so C2 is an in-distribution
claim. The
budget and clipping-norm nulls exclude large effects over the tested
ranges, not small ones (Appendix~\ref{sec:res-budget}).

\paragraph{Measurement.}
The verifier defines correctness, and strict scoring also rewards answer
format. Under the format-robust endpoint the MATH \ours{}$-$base gain
falls from 2.51 to 1.26 points (Table~\ref{tab:robust}); GSM8K's scorer
already credits unboxed answers, so it has no such endpoint. The MATH
evaluation set excludes the 500 problems that earlier runs used for
checkpoint selection (Appendix~\ref{app:config-ledger}), and the earlier
design-study runs in Table~\ref{tab:design} are compared only with each
other (Appendix~\ref{app:config-shared}).

\paragraph{Comparators.}
The supervised arms use recipe parameters selected on MATH and carried
unchanged to the other benchmarks (Appendix~\ref{app:config-pilots}), so
C2 is a statement about these recipes at a matched guarantee. The
non-private references keep the private hyperparameters rather than being
tuned for noiseless training, so the CommonsenseQA cost is measured
against a reference that a tuned run could raise, and the
mathematical-task ratios are descriptive (Section~\ref{sec:res-nonprivate});
a non-private learning-rate sweep would turn both into interval
estimates. The horizon differs between benchmarks ($T{=}50$ and $100$).

\paragraph{Future work.}
Each of these gaps has a direct experiment. A one-run audit with enough
canaries to have power at this horizon~\cite{steinke2023privacy} would
put an empirical lower bound beside the proof, and a membership
attack~\cite{liu2026auditingdatamembershipreinforcement} run on both the
non-private and the private model would show how much measurable leakage
the mechanism removes. The zero-advantage control extends naturally into an advantage ladder:
adding a shuffled-reward arm, which keeps the advantage distribution but
detaches it from correctness, and a positive-only unnormalized advantage
that scores each verified completion 1, between the control and the full
group-normalized advantage, would show how much of the reward effect
comes from verifier correctness and how much from the within-group
contrast itself, refining the attribution the control already provides. A compute-matched DP-SFT
with its pilot budget charged to
$\epsilon$~\cite{papernot2022hyperparameter} would make the supervised
comparison end to end. Verifiers beyond final-answer
mathematics, such as unit tests for code, would test whether the result
depends on short binary rewards. Finally, moving the record from one prompt
to one contributor's prompts by clipping at the user
boundary~\cite{charles2025userlevel} is the step that a corpus whose
subjects can be harmed would require (Sec.~\ref{sec:threat}).

\section{Conclusion}
\label{sec:conclusion}

Prompt-level privacy for RLVR depends on choosing the record: taking the
group of responses to one prompt as the record makes GRPO an instance of
DP-SGD whose budget depends on neither the group size nor the clipping
norm, and yields, to our knowledge, the first differential privacy
guarantee for RLVR training. Under that guarantee, at a per-run budget of
$\epsilon=8$ on one 1.5B-parameter model, the verifier's reward signal
still improves accuracy on MATH and GSM8K over a matched control in every
seed, the improvement survives format-robust rescoring and length
controls, and at the same budget it outperforms the evaluated private
imitation recipes on both tasks. \ours{} retains 85--90\% of the
non-private gain on the mathematical tasks, where the noise of an eightfold
tighter budget changes MATH accuracy by at most 1.2 points, and less than half on
CommonsenseQA, where more remains to learn. The protocol that separates these effects, a zero-advantage
control at a matched budget with format-robust and response-length checks,
applies to any private post-training method. Larger base models, which
leave more to learn, are the natural next test of both the method and
its cost.

\section*{LLM usage considerations}
LLMs were used for editorial purposes in this manuscript, and all outputs
were inspected by the authors to ensure accuracy and originality. We also
used an LLM (Anthropic's Claude) beyond editing: it drafted and revised
prose, helped write and review the training, evaluation and analysis code,
and produced the tables and figures from the saved outputs. The authors
set the research questions, the mechanism, the experimental design and
every claim in the paper; reviewed the code, whose privacy-critical paths
are covered by the mechanism-verification tests of
Appendix~\ref{app:mechanism-verification}; approved every job before it
ran; and checked all reported numbers and references against their
sources. The model we train, Qwen2.5-1.5B-Instruct, is the object of study
and contributes no text or judgment to this paper. Every run trains on a
single GPU, the longest for about 12 hours
(Appendix~\ref{app:config-shared}). The authors take full responsibility
for the paper.

\section*{Ethical Considerations}
All training and evaluation data are public benchmarks, and no human
subjects were involved. We treat the training prompts as private only to
evaluate the mechanism. The guarantee protects one prompt, not one person:
a person who contributes $k$ prompts is protected only through group
privacy, which degrades with $k$ (Proposition~\ref{prop:group}). A
deployment on confidential problems whose authors or subjects could be
harmed therefore needs a privacy unit aligned with them, such as
user-level accounting~\cite{charles2025userlevel}. The mechanism releases
weights only; the training telemetry it reads must stay unreleased.

\bibliographystyle{IEEEtran}
\bibliography{refs}

\appendices
\section{Proofs for Section~\ref{sec:privacy-guarantee}}
\label{app:privacy-proof}

\subsection{Notation}
\label{app:notation}

Throughout, $D\simeq D'$ are neighboring prompt datasets in the sense of
Definition~\ref{def:prompt-dp}, and $d$ is the number of LoRA
coordinates. Because that relation is symmetric and~\eqref{eq:dp} is
required on every ordered pair, it suffices to treat pairs in which
slot $k$ is present in $D$ and absent in $D'$ and to bound both
$H_\alpha(\cdot\Vert\cdot)$ orders; this is what Lemma~\ref{lem:round-main}
does. We add a round index to the quantities of
Algorithm~\ref{alg:dpgrpo}: $c_{t,i}$ is the coin of line~2, $\bar g_{t,i}$
the clipped gradient of line~7, and $z_t$ the noise of line~8. Round $t$
computes the unnormalized release
\begin{equation}
U_t=\sum_{i=1}^n c_{t,i}\,b_i\,\bar g_{t,i}+z_t,
\qquad
\widehat g_t=\frac{U_t}{qn}.
\label{eq:app-round}
\end{equation}
Since $qn$ is public, $U_t\mapsto\widehat g_t$ is a bijection with
measurable inverse, so by Lemma~\ref{lem:hs}(i) below every hockey-stick
divergence between round laws is the same whether computed on $U_t$ or on
$\widehat g_t$; we compute on $U_t$, where the algebra is simplest, and
state every result for the release $\widehat g_t$, which is the object of
Section~\ref{sec:privacy-guarantee}. Write $h_{<t}=\widehat g_{1:t-1}$
for the prior releases, $\mathcal L_D(X)$ for the law of a random
variable $X$ when the input is $D$, and
$\mathcal M_t(D\mid h_{<t})=\mathcal L_D(\widehat g_t\mid h_{<t})$ for the
conditional law of round $t$. By (A1), $\theta_t=\Phi_t(h_{<t})$ for a
fixed public map $\Phi_t$ that iterates line~9 from $\theta_1$; if public
evaluations influence training, they are part of $\Phi_t$. The final
checkpoint is $\theta_{T+1}=\Phi_{T+1}(\widehat g_{1:T})$.

By (A2), (A3), and a standard randomization (transfer) argument~\cite{kallenberg2021foundations}, each clipped gradient
can be realized as
\begin{equation}
\bar g_{t,i}=\operatorname{clip}_C\bigl(g(\theta_t,x_i,\omega_{t,i})\bigr),
\qquad
\operatorname{clip}_C(v)=v\min\Bigl(1,\tfrac{C}{\lVert v\rVert_2}\Bigr),
\label{eq:app-realize}
\end{equation}
for a fixed measurable map $g$ and randomness $\omega_{t,i}$ that is
independent across slots and rounds and independent of all coins and noise.
The map $g$ includes the fixed rule that replaces a non-finite prompt
gradient by zero.

\subsection{Hockey-stick divergence and domination}
\label{app:domination}

The hockey-stick divergence $H_\alpha$ of Definition~\ref{def:profile} is
the $f$-divergence with convex $f(u)=(u-\alpha)_+$. A mechanism
$\mathcal M$ is $(\epsilon,\delta)$-DP if and only if
$H_{e^\epsilon}(\mathcal M(D)\Vert\mathcal M(D'))\le\delta$ for every ordered
neighboring pair~\cite{balle2018privacy}.

\begin{definition}[Domination~\cite{zhu2022optimal}]
\label{def:dominate}
An ordered pair of measures $(P,Q)$ \emph{dominates} $(M,M')$ if
$H_\alpha(M\Vert M')\le H_\alpha(P\Vert Q)$ for all $\alpha\ge0$.
\end{definition}

Throughout, $A=A_{q,\sigma}$ and $B=B_\sigma$ of~\eqref{eq:AB}, so
Lemma~\ref{lem:round-main} states that $(A,B)$ dominates one round in the
remove direction and $(B,A)$ in the add direction. These are tight
dominating pairs for the Poisson-subsampled Gaussian
mechanism~\cite{balle2018privacy,zhu2022optimal}, which is why
$\delta_{q,\sigma,T}$ of~\eqref{eq:profile} is the quantity the PLD
accountant computes. Requiring domination for all $\alpha\ge0$, not only
$\alpha\ge1$, is what makes it compose (Lemma~\ref{lem:comp}).

\begin{lemma}[Properties of $H_\alpha$]
\label{lem:hs}
For $\alpha\ge0$:
\begin{enumerate}
\item[(i)] \emph{Data processing.} $H_\alpha(KP\Vert KQ)\le H_\alpha(P\Vert Q)$
for every Markov kernel $K$, with equality if $K$ is a bijection with
measurable inverse.
\item[(ii)] \emph{Joint convexity.} For measurable kernels $u\mapsto P_u,Q_u$
and a common mixing law $\mu$,
$H_\alpha\bigl(\int P_u\,d\mu\,\big\Vert\int Q_u\,d\mu\bigr)
\le\int H_\alpha(P_u\Vert Q_u)\,d\mu\le\sup_uH_\alpha(P_u\Vert Q_u)$.
\item[(iii)] \emph{Common factors.}
$H_\alpha(P\otimes R\Vert Q\otimes R)=H_\alpha(P\Vert Q)$.
\end{enumerate}
\end{lemma}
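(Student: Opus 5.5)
We prove the three parts separately, each from the representation of $H_\alpha$ as a supremum, $H_\alpha(P\Vert Q)=\sup_O[P(O)-\alpha Q(O)]$, together with its density form. For any common dominating measure $\nu$ with $p=dP/d\nu$ and $q=dQ/d\nu$, we have $H_\alpha(P\Vert Q)=\int(p-\alpha q)_+\,d\nu$. The supremum is attained at $O=\{p>\alpha q\}$. This density form is the only analytic fact we need, and it holds for every $\alpha\ge0$ without using convexity in $\alpha$.

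For (i), fix a measurable $O$ in the output space of $K$. Then
\[
KP(O)-\alpha KQ(O)=\int K(x,O)\,d(P-\alpha Q)(x).
\]
The function $x\mapsto K(x,O)$ takes values in $[0,1]$, so the integral is at most $\int(p-\alpha q)_+\,d\nu=H_\alpha(P\Vert Q)$. Concretely, the integrand is largest when the weight is $1$ where $p>\alpha q$ and $0$ elsewhere. Taking the supremum over $O$ gives the inequality. For equality under a bijection $K$ with measurable inverse $K^{-1}$, apply the inequality to $K^{-1}$, which is also a Markov kernel, to obtain the reverse bound.

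For (ii), fix $O$ and write
\[
\Bigl(\int P_u\,d\mu\Bigr)(O)-\alpha\Bigl(\int Q_u\,d\mu\Bigr)(O)=\int\bigl[P_u(O)-\alpha Q_u(O)\bigr]\,d\mu(u).
\]
Each integrand is at most $H_\alpha(P_u\Vert Q_u)$. Bound it pointwise, then take the supremum over $O$ outside the integral; this is legitimate because the bound is uniform in $O$. The second inequality is trivial. Measurability of $u\mapsto H_\alpha(P_u\Vert Q_u)$ is the one technical point. We handle it by writing $H_\alpha$ through densities with respect to a jointly measurable dominating kernel, for example $(P_u+Q_u)/2$. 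Alternatively, we interpret the middle term as an upper integral, which suffices for how the lemma is used.

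For (iii), the inequality $\le$ follows from (i) with the marginalization kernel $(x,y)\mapsto x$, which sends $P\otimes R$ to $P$ and $Q\otimes R$ to $Q$. That kernel, however, bounds the divergence of the marginals by that of the products, which is the reverse of what we want. So we prove both directions explicitly:
\begin{itemize}
\item $H_\alpha(P\Vert Q)\le H_\alpha(P\otimes R\Vert Q\otimes R)$ follows from (i) applied to the projection kernel.
\item $H_\alpha(P\otimes R\Vert Q\otimes R)\le H_\alpha(P\Vert Q)$ follows from (i) applied to the kernel $x\mapsto\delta_x\otimes R$, which maps $P$ to $P\otimes R$ and $Q$ to $Q\otimes R$.
\end{itemize}
Together these give equality.

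The main obstacle is the density identity underlying (i) for general measures. We also need the correct direction of each kernel in (iii), since the projection kernel alone yields the wrong inequality.
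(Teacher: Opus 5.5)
Your proof is correct. For (iii) it is the same as the paper's: the kernel $x\mapsto\delta_x\otimes R$ and the coordinate projection are both Markov kernels, so (i) gives both inequalities. You should drop the opening sentence of your (iii), which assigns the $\le$ direction to the projection; your two bullets then get the directions right.

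For (i) and (ii) the routes differ. The paper gives no argument and cites them as the data-processing inequality and joint convexity of $f$-divergences with convex $f$, here $f(u)=(u-\alpha)_+$. You instead prove them from the variational form $H_\alpha(P\Vert Q)=\sup_O[P(O)-\alpha Q(O)]=\int(p-\alpha q)_+\,d\nu$. Since $KP(O)-\alpha KQ(O)=\int K(x,O)\,d(P-\alpha Q)(x)$ with $0\le K(x,O)\le1$, and a mixture evaluated at $O$ is the $\mu$-average of its components at $O$, both properties follow from $H_\alpha$ being a supremum of functionals linear in $(P,Q)$.

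The citation is shorter and places the lemma inside general $f$-divergence theory. Your route is self-contained and shows that nothing beyond $\alpha\ge0$ and $0\le K(x,O)\le1$ is used. It also brings out the measurability of $u\mapsto H_\alpha(P_u\Vert Q_u)$, which the citation passes over. For the way the paper uses (ii), only the outer bound by $\sup_uH_\alpha(P_u\Vert Q_u)$ is needed. Your pointwise-in-$O$ estimate $\int[P_u(O)-\alpha Q_u(O)]\,d\mu(u)\le\sup_uH_\alpha(P_u\Vert Q_u)$ gives that bound using only the measurability of $u\mapsto P_u(O)$, which the kernel assumption supplies. So your fallback is the cleanest way to close that point.
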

\begin{IEEEproof}
Parts (i) and (ii) hold for every $f$-divergence with convex $f$~\cite{liese2006divergences}. For (iii),
appending an independent draw from $R$ and discarding the second coordinate
are both Markov kernels, so (i) gives both inequalities.
\end{IEEEproof}

\subsection{Proof of Lemma~\ref{lem:round-main}}
\label{app:proof-round}

\begin{lemma}[Conditional decomposition of one round]
\label{lem:decomp}
Assume (A1)--(A3). Let $D'$ be $D$ with occupied slot $k$ removed, and fix
$h_{<t}$. There are random variables $(S,V,c,Z)$, given by the same
measurable functions of the round-$t$ randomness under $D$ and under $D'$
and hence with the same joint law under both, such that $S$, $(c,V)$, and
$Z$ are mutually independent,
$c\sim\mathrm{Bernoulli}(q)$ is independent of $V$, $\lVert V\rVert_2\le C$,
$Z\sim\mathcal N(0,\sigma^2C^2I_d)$, and the conditional laws of $U_t$
given $h_{<t}$ are
\begin{equation}
\mathcal L_D(U_t\mid h_{<t})=\mathcal L(S+cV+Z),\quad
\mathcal L_{D'}(U_t\mid h_{<t})=\mathcal L(S+Z).
\label{eq:decomp}
\end{equation}
\end{lemma}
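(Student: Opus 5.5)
The construction is a coupling on a single probability space carrying the round-$t$ randomness. Fix $h_{<t}$. By (A1), $\theta_t=\Phi_t(h_{<t})$ is a deterministic public value, identical under $D$ and $D'$. By (A3), the coins $c_{t,1},\dots,c_{t,n}$, the rollout randomness $\omega_{t,1},\dots,\omega_{t,n}$ of the realization \eqref{eq:app-realize}, and the noise $z_t$ are mutually independent of one another and of $D$. By freshness, they are also independent of $h_{<t}$, since $h_{<t}$ is a function of earlier rounds' randomness only. Conditioning on $h_{<t}$ therefore leaves their joint law unchanged. I take this common space for both datasets and define
\begin{equation*}
S=\sum_{i\ne k}c_{t,i}\,b_i\,\operatorname{clip}_C\bigl(g(\theta_t,x_i,\omega_{t,i})\bigr),\quad
V=\operatorname{clip}_C\bigl(g(\theta_t,x_k,\omega_{t,k})\bigr),\quad
c=c_{t,k},\quad Z=z_t .
\end{equation*}

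Next I check that these are the same functions under both datasets. Since $D$ and $D'$ agree on $(b_i,x_i)$ for every $i\ne k$, $S$ is literally the same function of $(c_{t,i},\omega_{t,i})_{i\ne k}$ under both. $V$ uses the record $x_k$ of $D$. Under $D'$ it is simply never used, because $b'_k=0$. So $(S,V,c,Z)$ is one tuple of random variables on one space, and its joint law does not depend on which dataset we call it under.

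Substituting into \eqref{eq:app-round} gives the two laws in \eqref{eq:decomp}. Under $D$, $U_t=S+cV+Z$ because $b_k=1$. Under $D'$, $U_t=S+Z$ because $b'_k=0$ makes slot $k$ contribute the zero vector, which is the convention of Section~\ref{sec:threat}.

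The remaining properties follow directly:
\begin{itemize}
\item $S$ is measurable with respect to $\{c_{t,i},\omega_{t,i}\}_{i\ne k}$.
\item $(c,V)$ is measurable with respect to $\{c_{t,k},\omega_{t,k}\}$.
\item $Z$ is measurable with respect to the noise alone.
\end{itemize}
Because these three families are independent, $S$, $(c,V)$ and $Z$ are mutually independent. The variable $c$ is independent of $V$ because the coin and the rollout randomness are independent. Its law is $\mathrm{Bernoulli}(q)$ by line~2, and $\lVert V\rVert_2\le C$ by the definition of $\operatorname{clip}_C$. By (A4), $Z\sim\mathcal N(0,\sigma^2C^2I_d)$ exactly.

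The main obstacle is the realization step \eqref{eq:app-realize}. I need independence of the $\omega_{t,i}$ across slots. I also need the law of $V$ to depend only on $(\theta_t,x_k)$ and not on which other prompts share the batch. This is exactly what (A2) supplies: without it, batched generation could couple slot $k$ to the other slots, and $S$ would differ between $D$ and $D'$. I would therefore invoke (A2) together with the transfer/randomization theorem~\cite{kallenberg2021foundations} to construct $g$ and the independent $\omega_{t,i}$. This requires noting that the non-finite-gradient rule is part of $g$, so the clipping bound holds pointwise.
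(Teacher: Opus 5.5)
Your proposal is correct and follows the paper's proof essentially step for step. It uses the same choice $S=\sum_{i\ne k}c_{t,i}b_i\bar g_{t,i}$, $V=\bar g_{t,k}$, $c=c_{t,k}$, $Z=z_t$ on the realization \eqref{eq:app-realize}, the same argument that $S$ is one function under $D$ and $D'$ because the records off slot $k$ agree, independence from (A3) together with the cross-slot independence from (A2), and $\lVert V\rVert_2\le C$ from clipping. The one small deviation is that you cite (A4) for the Gaussian law of $Z$, which the lemma does not assume; the paper reads that law directly off line~8 of Algorithm~\ref{alg:dpgrpo}, so (A4) is not needed at this step.
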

\begin{IEEEproof}
By (A1), $h_{<t}$ fixes $\theta_t$. Set
$S=\sum_{i\ne k}c_{t,i}b_i\bar g_{t,i}$, $V=\bar g_{t,k}$, $c=c_{t,k}$, and
$Z=z_t$. For $i\ne k$ the record $(b_i,x_i)$ is the same in $D$ and $D'$, and
by~\eqref{eq:app-realize} $\bar g_{t,i}$ is a function of
$(\theta_t,x_i,\omega_{t,i})$; hence $S$ is a function of
$(c_{t,-k},\omega_{t,-k})$ with the same law under both datasets. By (A3)
the round-$t$ coins, rollout randomness, and noise are fresh, mutually
independent, and independent of $h_{<t}$, so $S$, $(c,V)$, and $Z$ are
independent and $c$ is independent of $V$. Clipping gives
$\lVert V\rVert_2\le C$. Under $D'$, slot $k$ is absent and contributes the
zero vector.
\end{IEEEproof}

\begin{IEEEproof}[Proof of Lemma~\ref{lem:round-main}]
Fix $h_{<t}$ and let $D'$ be $D$ with slot $k$ removed. By
Lemma~\ref{lem:decomp}, conditioned on $(S,V)=(s,v)$ the two conditional
laws of $U_t$ are
\begin{align}
P_{s,v}&=(1-q)\,\mathcal N(s,\sigma^2C^2I_d)+q\,\mathcal N(s+v,\sigma^2C^2I_d),\\
Q_s&=\mathcal N(s,\sigma^2C^2I_d).
\end{align}
Let $R$ be orthogonal with $Rv=\lVert v\rVert_2e_1$ and apply the bijection
$y\mapsto R(y-s)/C$; by Lemma~\ref{lem:hs}(i) this preserves $H_\alpha$. The
images are $A_a\otimes\mathcal N(0,\sigma^2I_{d-1})$ and
$B\otimes\mathcal N(0,\sigma^2I_{d-1})$, where
$A_a=(1-q)\,\mathcal N(0,\sigma^2)+q\,\mathcal N(a,\sigma^2)$ and
$a=\lVert v\rVert_2/C\in[0,1]$. Lemma~\ref{lem:hs}(iii) removes the common
factor. The Markov kernel $K_a(y)=ay+\xi$ with independent
$\xi\sim\mathcal N(0,(1-a^2)\sigma^2)$ maps $\mathcal N(m,\sigma^2)$ to
$\mathcal N(am,\sigma^2)$, hence $A$ to $A_a$ and $B$ to $B$. By
Lemma~\ref{lem:hs}(i),
\begin{equation}
H_\alpha(P_{s,v}\Vert Q_s)\le H_\alpha(A\Vert B),
\qquad
H_\alpha(Q_s\Vert P_{s,v})\le H_\alpha(B\Vert A).
\end{equation}
Finally, $\mathcal L_D(U_t\mid h_{<t})=\int P_{s,v}\,d\mu$ and
$\mathcal L_{D'}(U_t\mid h_{<t})=\int Q_s\,d\mu$ for the common law $\mu$
of $(S,V)$, so Lemma~\ref{lem:hs}(ii) gives domination of the pair of
conditional laws of $U_t$ by $(A,B)$ and of the reversed pair by $(B,A)$.
Applying the bijection $U_t\mapsto\widehat g_t=U_t/(qn)$
(Lemma~\ref{lem:hs}(i)) transfers both bounds to
$(\mathcal M_t(D\mid h_{<t}),\mathcal M_t(D'\mid h_{<t}))$, which
is~\eqref{eq:round}. Nothing in the argument depends on $t$, $h_{<t}$,
$G$, or $C$.
\end{IEEEproof}

\subsection{Composition step of Theorem~\ref{thm:privacy}}
\label{app:proof-thm}

The proof of Theorem~\ref{thm:privacy} in Section~\ref{sec:privacy-guarantee}
invokes one external result, stated here in the notation of this appendix.

\begin{lemma}[Adaptive composition~\cite{zhu2022optimal}]
\label{lem:comp}
Fix an ordered pair $(D,D')$ and measurable kernels
$h_{<t}\mapsto\mathcal M_t(\cdot\mid h_{<t})$. If for every $t$ and every
$h_{<t}$ the pair $(P_t,Q_t)$ dominates
$(\mathcal M_t(D\mid h_{<t}),\mathcal M_t(D'\mid h_{<t}))$, then
$(\bigotimes_tP_t,\bigotimes_tQ_t)$ dominates
$(\mathcal L_D(\widehat g_{1:T}),\mathcal L_{D'}(\widehat g_{1:T}))$.
\end{lemma}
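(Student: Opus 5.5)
The plan is induction on $T$, peeling off the last round and using two facts about hockey-stick divergences: they can be written as integrals over a first coordinate, and they are monotone under post-processing. I will work with the tradeoff-function / hockey-stick characterization of Zhu et al. Domination for all $\alpha\ge0$ means the privacy-loss comparison is preserved under any test, not only the $\alpha\ge1$ part that DP uses. The case $T=1$ is the hypothesis itself, with $h_{<1}$ empty.

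For the inductive step, write the joint law under $D$ as $\mathcal L_D(\widehat g_{1:T-1})\otimes\mathcal M_T(D\mid\cdot)$, a first coordinate followed by a kernel, and likewise under $D'$. The key identity I would use is a disintegration formula. For a set $O$ in the product space with sections $O_h$,
\[
M(O)-\alpha M'(O)=\int\bigl[\mathcal M_T(D\mid h)(O_h)-\alpha\,\mathcal M_T(D'\mid h)(O_h)\bigr]\ldots
\]
This is awkward because the outer measures differ. So I will instead use the standard route of Zhu et al.:
\begin{itemize}
\item[(a)] Show that domination of $(M,M')$ by $(P,Q)$ for all $\alpha\ge0$ is equivalent to the existence of a Markov kernel $K$ with $KP=M$ and $KQ=M'$. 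This is Blackwell's theorem for binary experiments; the condition for all $\alpha\ge0$ is exactly the condition comparing the tradeoff curves.
\item[(b)] For each $h$, fix a kernel $K_h$, measurable in $h$, with $K_hP_T=\mathcal M_T(D\mid h)$ and $K_hQ_T=\mathcal M_T(D'\mid h)$.
\item[(c)] By the inductive hypothesis there is a kernel $K'$ with $K'(\bigotimes_{t<T}P_t)=\mathcal L_D(\widehat g_{1:T-1})$ and $K'(\bigotimes_{t<T}Q_t)=\mathcal L_{D'}(\widehat g_{1:T-1})$.
\end{itemize}

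The composite kernel acts on a sample $(u,v)$ from the product. It first maps $u$ to $h$ through $K'$, then maps $v$ to $K_h(v)$, and outputs $(h,K_h(v))$. It sends $\bigotimes_{t\le T}P_t$ to $\mathcal L_D(\widehat g_{1:T})$, because the $T$-th factor is independent of the first ones, and similarly for the $Q$'s. Lemma~\ref{lem:hs}(i) then gives domination of the joint laws.

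The main obstacle is step (a)/(b): Blackwell's theorem on general measurable spaces, together with a measurable selection of $h\mapsto K_h$. For (a), I would first invoke the Blackwell–Sherman–Stein theorem for dichotomies in its standard-Borel form. For (b), I would avoid measurable selection entirely by building $K_h$ explicitly through the likelihood ratio. Concretely, map $P_T,Q_T$ to the law of their privacy-loss variable, then use quantile couplings of tradeoff functions, which depend measurably on $h$ through $\mathcal M_T(\cdot\mid h)$. Since the paper only needs the case $P_t=A$, $Q_t=B$, which are one-dimensional with a monotone likelihood ratio, this explicit quantile construction is concrete. Since the paper cites this lemma from Zhu et al., I would ultimately defer the measure-theoretic details to that reference.
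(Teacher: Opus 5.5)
Your architecture is sound except at step (b), and step (b) is where all of this route's difficulty lies. Step (a) is the right equivalence. Domination for every $\alpha\ge0$ is exactly inclusion of the testing regions, hence Blackwell sufficiency on the Euclidean spaces used here. Once a jointly measurable family $h\mapsto K_h$ exists, your composite kernel and Lemma~\ref{lem:hs}(i) finish the induction.

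The construction you offer for that family fails. $K_h$ must send $P_T$ to $\mathcal M_T(D\mid h)$ and $Q_T$ to $\mathcal M_T(D'\mid h)$ simultaneously. A quantile (comonotone) map fixes one side and forces the other. Since $(A,B)$ is atomless with a strictly increasing likelihood ratio, a non-decreasing map of the sample can only do two things to the tradeoff curve of $(A,B)$: leave it unchanged where the map is injective, or replace arcs of it by chords where the map merges intervals. So it cannot produce even $(A_a,B)$ with $a\in(0,1)$, whose tradeoff curve is strictly convex and differs from that of $(A,B)$. The paper's proof of Lemma~\ref{lem:round-main} needs the randomized kernel $y\mapsto ay+\xi$ for exactly this pair.

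In general the garbling needs extra randomness and comes from a Strassen-type martingale coupling of the two standard experiments. You would then still have to choose it measurably in $h$, either through an explicit one-dimensional coupling shown to depend measurably on its marginals, or through a measurable-selection theorem. Deferring ``the details'' to Zhu et al.\ does not supply this step. It amounts to citing the lemma itself, which is all the paper does: it cites Zhu et al.\ and notes that the $f$-DP composition theorem gives the same conclusion.

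The disintegration you set aside avoids (a) and (b) entirely, because the differing outer measures can be absorbed into the hockey-stick parameter. Let $m,m'$ be the densities of $\mu=\mathcal L_D(\widehat g_{1:T-1})$ and $\mu'=\mathcal L_{D'}(\widehat g_{1:T-1})$ with respect to $\rho=\mu+\mu'$. Write $K_h=\mathcal M_T(D\mid h)$ and $K'_h=\mathcal M_T(D'\mid h)$, and let $k_h,k'_h$ be jointly measurable densities of these with respect to $\lambda_h=K_h+K'_h$. Then
\begin{align*}
&H_\alpha\bigl(\mathcal L_D(\widehat g_{1:T})\,\Vert\,\mathcal L_{D'}(\widehat g_{1:T})\bigr)\\
&\quad=\iint\bigl(m\,k_h-\alpha\,m'k'_h\bigr)_+\,d\lambda_h\,d\rho\\
&\quad=\int_{\{m>0\}}m\,H_{\alpha m'/m}(K_h\Vert K'_h)\,d\rho\\
&\quad\le\int_{\{m>0\}}m\,H_{\alpha m'/m}(P_T\Vert Q_T)\,d\rho,
\end{align*}
and the last integral equals $H_\alpha(\mu\otimes P_T\Vert\mu'\otimes Q_T)$. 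Now exchange the roles of the coordinates: condition on the last coordinate $v$ and apply the induction hypothesis at parameter $\alpha q(v)/p(v)$, where $p,q$ are densities of $P_T,Q_T$. This replaces $(\mu,\mu')$ by $(\bigotimes_{t<T}P_t,\bigotimes_{t<T}Q_t)$ and completes the induction.

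The effective parameters $\alpha m'/m$ and $\alpha q/p$ range over all of $[0,\infty)$, so this is exactly where domination for every $\alpha\ge0$, rather than only $\alpha\ge1$, is used. The only measure theory needed is a jointly measurable version of the densities, which exists for measurable kernels into $\mathbb R^d$.
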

The tradeoff-function composition theorem of Gaussian differential
privacy~\cite{dong2022gaussian} gives the same conclusion. Applied with
$(P_t,Q_t)=(A,B)$ for the remove direction and $(B,A)$ for the add
direction, which Lemma~\ref{lem:round-main} supplies uniformly in $h_{<t}$,
it yields
\begin{align}
H_{e^\epsilon}\bigl(\mathcal L_D(\widehat g_{1:T})\,\big\Vert\,\mathcal L_{D'}(\widehat g_{1:T})\bigr)
&\le H_{e^\epsilon}(A^{\otimes T}\Vert B^{\otimes T}),\\
H_{e^\epsilon}\bigl(\mathcal L_{D'}(\widehat g_{1:T})\,\big\Vert\,\mathcal L_D(\widehat g_{1:T})\bigr)
&\le H_{e^\epsilon}(B^{\otimes T}\Vert A^{\otimes T}),
\end{align}
both at most $\delta_{q,\sigma,T}(\epsilon)$ of~\eqref{eq:profile}; this
is the step used in the proof of Theorem~\ref{thm:privacy}.

\subsection{Corollaries}
\label{app:corollaries}

\begin{corollary}[Replacing a prompt]
\label{cor:replace}
Under the assumptions of Theorem~\ref{thm:privacy}, let
$\delta=\delta_{q,\sigma,T}(\epsilon)$. The output
$(\theta_2,\dots,\theta_{T+1})$ satisfies~\eqref{eq:dp} with
$(2\epsilon,\,(1+e^{\epsilon})\delta)$ for every pair of prompt datasets
that differ only in the record at one occupied slot.
\end{corollary}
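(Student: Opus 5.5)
The plan is to reduce the replacement relation to two add/remove steps and then apply group privacy (Proposition~\ref{prop:group}) with $k=2$. Let $D$ and $D''$ be prompt datasets that agree everywhere except at one occupied slot $k$, where $D$ holds record $x_k$ and $D''$ holds $x''_k$, with $b_k=b''_k=1$. Introduce the intermediate dataset $D'$ that agrees with both off slot $k$ and has $b'_k=0$. Its record at slot $k$ is arbitrary and never read. By Definition~\ref{def:prompt-dp}, $D\simeq D'$, since the presence bit differs at exactly one slot and all other slots coincide. Likewise $D'\simeq D''$. So $D$ and $D''$ are joined by a chain of two neighboring steps within the fixed public slot universe. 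This is the sense in which they ``differ in $k=2$ records'' in Proposition~\ref{prop:group}, whose proof is exactly the chaining of Definition~\ref{def:dp} along such a path.

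Next I take $\epsilon$ and $\delta=\delta_{q,\sigma,T}(\epsilon)$ from Theorem~\ref{thm:privacy}. It guarantees~\eqref{eq:dp} for the trajectory output on every ordered neighboring pair, in both directions. Chaining along the path, for every measurable set $O$ of trajectories,
\[
\Pr[\mathcal A(D)\in O]\le e^{\epsilon}\Pr[\mathcal A(D')\in O]+\delta
\le e^{\epsilon}\bigl(e^{\epsilon}\Pr[\mathcal A(D'')\in O]+\delta\bigr)+\delta,
\]
which equals $e^{2\epsilon}\Pr[\mathcal A(D'')\in O]+(1+e^{\epsilon})\delta$. This matches Proposition~\ref{prop:group} at $k=2$, where $\delta\sum_{j<2}e^{j\epsilon}=(1+e^{\epsilon})\delta$. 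The reverse ordered pair $(D'',D)$ follows by the same chain traversed backwards, using the other direction of Theorem~\ref{thm:privacy}.

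The only step needing care is checking that the intermediate dataset is legitimate. It must live on the same public $n$ slots, and slot $k$ must carry an unread placeholder, so that both links are neighbors under Definition~\ref{def:prompt-dp} rather than under some other relation. It also matters that the theorem's bound holds uniformly over all neighbor pairs and all $O$, which Theorem~\ref{thm:privacy} provides. Everything else is the arithmetic above.
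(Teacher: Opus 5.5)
Your proposal is correct and follows the paper's own proof. The paper likewise passes through the intermediate dataset with slot $k$ absent, which gives $D\simeq D_0\simeq D''$, and applies Theorem~\ref{thm:privacy} twice to obtain $e^{2\epsilon}\Pr[\mathcal A(D'')\in O]+(1+e^{\epsilon})\delta$, exactly your chained inequality.
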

\begin{IEEEproof}
Let $D$ and $D'$ differ by the prompt at one occupied slot, and let $D_0$ be
either dataset with that slot absent. $D\simeq D_0\simeq D'$, so applying
Theorem~\ref{thm:privacy} twice gives, for every output set $O$,
$\Pr[O\mid D]\le e^{\epsilon}\Pr[O\mid D_0]+\delta
\le e^{2\epsilon}\Pr[O\mid D']+(1+e^{\epsilon})\delta$. Accounting directly
under the replace-one relation can be tighter.
\end{IEEEproof}

\begin{corollary}[Per-record variants of lines 4--6]
\label{cor:baselines}
Let Algorithm~$1'$ be Algorithm~\ref{alg:dpgrpo} with lines~4--6 replaced
by any computation that, given $\theta_t$, maps the record $x_i$ and fresh
private randomness to a gradient $g_i$. Under (A1)--(A4),
Algorithm~$1'$ satisfies Theorem~\ref{thm:privacy} with the same
$(q,\sigma,T)$. In particular this covers verifier-filtered DP-SFT, which
samples $G$ traces from the public $\pi_{\rm ref}$, keeps those the verifier
accepts, and sets $g_i$ to the gradient of their mean token-normalized
log-likelihood (zero if none is accepted), and gold-SFT, which sets $g_i$
to the gradient of the log-likelihood of the reference solution and takes
the problem together with that solution as the record.
\end{corollary}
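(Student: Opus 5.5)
The plan is to reduce Corollary~\ref{cor:baselines} to the proof of Theorem~\ref{thm:privacy} already given, by checking that the modified lines~4--6 enter that proof only through (A2)--(A4). The proof of Lemma~\ref{lem:round-main} uses lines~4--6 only via the realization~\eqref{eq:app-realize}: given $\theta_t$, the clipped gradient $\bar g_{t,i}$ is a fixed measurable function of $(\theta_t,x_i,\omega_{t,i})$, with $\omega_{t,i}$ fresh and independent across slots and rounds and independent of the coins and noise. So the first step is to restate that realization for Algorithm~$1'$. By hypothesis the replacement computation maps $\theta_t$, the record $x_i$ and fresh private randomness to $g_i$. Clipping at line~7 is unchanged. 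Hence $\bar g_{t,i}=\operatorname{clip}_C(g'(\theta_t,x_i,\omega_{t,i}))$ for some measurable $g'$, and (A2) and (A3) hold in exactly the form the appendix uses.

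With that in place, I would rerun the argument unchanged. Lemma~\ref{lem:decomp} goes through verbatim: $S$ depends only on the other slots, $V=\bar g_{t,k}$ satisfies $\lVert V\rVert_2\le C$, and $c$, $V$ and $Z$ have the required independence. The rotation, scaling and kernel $K_a$ steps of the proof of Lemma~\ref{lem:round-main}, together with joint convexity, then give domination by $(A,B)$ and by $(B,A)$ with the same $(q,\sigma)$. None of these steps refers to the form of the loss, $G$ or $C$. Lines~2, 3, 8 and 9 are untouched, so Lemma~\ref{lem:comp} and post-processing (Proposition~\ref{prop:post}) complete the argument exactly as in Theorem~\ref{thm:privacy}, giving $\delta_{q,\sigma,T}(\epsilon)$.

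The two named instances then need to be checked against the hypothesis. For verifier-filtered DP-SFT, the traces are sampled from the public $\pi_{\rm ref}$ with fresh randomness, and the verifier is public code. The accept set and the mean token-normalized log-likelihood gradient are therefore a measurable function of $(\theta_t,x_i,\omega_{t,i})$. The zero gradient when no trace is accepted is also such a function. Nothing is shared across prompts, since the averaging happens within the prompt before clipping. For gold-SFT the gradient is deterministic given $\theta_t$ and the record. This requires enlarging the record to the pair (problem, reference solution), so that the neighbor relation of Definition~\ref{def:prompt-dp} is read over that universe $\mathcal X$. The proof is indifferent to what $\mathcal X$ is.

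The main obstacle is making sure the instances genuinely satisfy isolation, that is, that no cross-prompt statistic enters before clipping. For filtered SFT one might worry that the rejection samples are drawn once, before training, rather than fresh in each round. I would handle this by treating them as part of $\omega_{t,i}$ only if they are drawn per slot and independently of other slots. If they are drawn once and reused, I would fold them into a fixed per-record randomness $\omega_i$. Since $\omega_i$ still depends only on $x_i$ and is independent of the coins and noise, the conditional decomposition remains valid: conditioning on $\omega_k$ and applying joint convexity over its law keeps the same dominating pair in every round. I would state this as the step to verify carefully.
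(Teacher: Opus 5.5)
Your main argument is the paper's proof, written out in more detail. The realization \eqref{eq:app-realize} constrains the per-record map only to be measurable, and Lemmas~\ref{lem:decomp}, \ref{lem:round-main} and~\ref{lem:comp} read lines~4--6 only through it. Lines~2--3 and~7--9 are unchanged, so Theorem~\ref{thm:privacy} carries over with the same $(q,\sigma,T)$. Your checks of the two named instances are also fine.

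Your last paragraph is not needed. The hypothesis requires fresh randomness, and the corollary places the trace sampling inside lines~4--6, that is, in every round. As sketched, it would also not work.

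\begin{itemize}
\item If traces were drawn once and reused, the obstruction is the persistent randomness $\omega_{-k}$ of the \emph{other} slots. Its posterior given $h_{<t}$ differs between $D$ and $D'$, so $S$ no longer has a common conditional law, and Lemma~\ref{lem:decomp} fails.
\item Conditioning on $\omega_k$ does not address this. A data-dependent posterior of $V$ is harmless, because $Q_s$ does not depend on $v$.
\item Mixing round by round is unavailable for the same reason.
\end{itemize}

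The repair is to condition once on all of $\omega_{1:n}$. You then run the per-round argument and Lemma~\ref{lem:comp} conditionally, and apply Lemma~\ref{lem:hs}(ii) a single time to the full-transcript laws.
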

\begin{IEEEproof}
The realization~\eqref{eq:app-realize} places no restriction on the map $g$
beyond measurability, and Lemmas~\ref{lem:decomp}, \ref{lem:round-main},
and~\ref{lem:comp} use lines~4--6 only through it. Both comparators keep
lines~2--3 and 7--9, so~\eqref{eq:app-round} and~\eqref{eq:app-realize}
describe them with a different $g$, and the proof of
Theorem~\ref{thm:privacy} goes through unchanged.
\end{IEEEproof}

\begin{remark}[Repeated releases]
The guarantee is per run. Releasing several runs trained on the same records
composes their transcripts, for example by composing their privacy-loss
distributions.
\end{remark}

\subsection{Numerical accounting of every configuration}
\label{app:rdp}

Table~\ref{tab:rdp} lists every $(n,m,\sigma,T)$ used in this paper with
$\delta=1/(10n)$, the certified PLD budget, and the corresponding
R\'enyi-DP (RDP) budget. The PLD column is the \texttt{dp\_accounting}
0.4.3 accountant with the add-or-remove relation, which composes the
privacy-loss distributions of $(A,B)$ and $(B,A)$ and reports the larger
$\epsilon$; its default pessimistic connect-the-dots discretization
upper-bounds the exact hockey-stick curve~\cite{doroshenko2022connect}.
Since $\delta_{q,\sigma,T}$ is nonincreasing in $\epsilon$, a returned value
$\epsilon^\ast$ certifies $(\epsilon,\delta)$-DP for every
$\epsilon\ge\epsilon^\ast$ by Theorem~\ref{thm:privacy}. Configurations
that share $(n,m,\sigma,T)$ share both budgets regardless of loss, clipping
norm, learning rate, or verifier (Corollary~\ref{cor:baselines}); in
particular the DP-SFT and gold-SFT comparators are accounted identically to
\ours{}.

The RDP column composes the RDP of the Poisson-subsampled Gaussian
mechanism, evaluated exactly at integer orders and by a convergent series
at fractional orders~\cite{mironov2019sampled},
over $T$ rounds and converts to $(\epsilon,\delta)$ with the improved
conversion of Balle et al.~\cite{balle2020hypothesis}, minimized over the
default order grid of \texttt{dp\_accounting} 0.4.3
($\{1.1,1.2,\dots,10.9\}\cup\{11,\dots,63\}\cup\{128,256,512,1024\}$).
Opacus~\cite{yousefpour2021opacus} applies the same conversion over a
slightly smaller default grid ($\{1.1,\dots,10.9\}\cup\{12,\dots,63\}$)
and returns the same values as Table~\ref{tab:rdp} to four decimal places. The two
accountants bound the same
composition; the PLD value is the tighter one, and the gap comes almost
entirely from the RDP-to-$(\epsilon,\delta)$ conversion, which is loose at
large $\epsilon$ and small
$\sigma$~\cite{balle2020hypothesis,gopi2021numerical,zhu2022optimal}. A
reader who prefers RDP should read the $\epsilon\in\{1,4.23,8\}$ frontier
as $\epsilon\in\{1.46,5.32,9.83\}$.

\begin{table}[t]
\caption{Every accounting configuration in this paper, with
$\delta=1/(10n)$ and $q=m/n$, under PLD (reported) and RDP accounting.
Parenthesized: the minimizing RDP order.}
\label{tab:rdp}
\centering
\footnotesize
\setlength{\tabcolsep}{3pt}
\begin{tabular}{@{}lcccc@{}}
\toprule
Configuration & $n$ & $(m,\sigma,T)$ & PLD & RDP\\
\midrule
MATH, $\epsilon{=}1$ arm & 6,988 & (128, 1.002, 50) & 1.000 & 1.463 (7.6)\\
MATH, $\epsilon{=}4.23$ arm & 6,988 & (128, 0.612, 50) & 4.235 & 5.322 (3.2)\\
MATH, reference$^{\dagger}$ & 6,988 & (128, 0.484, 50) & 8.000 & 9.830 (2.3)\\
MATH, DP-SFT $T{=}100$ & 6,988 & (128, 0.519, 100) & 8.000 & 9.702 (2.3)\\
MATH, coverage & 6,988 & (512, 0.702, 50) & 8.000 & 9.323 (2.6)\\
MATH, hist.\ rank-4 & 6,988 & (128, 0.612, 100) & 5.038 & 6.185 (3.0)\\
GSM8K, CSQA$^{\ddagger}$ & 6,961 & (128, 0.520, 100) & 8.000 & 9.701 (2.3)\\
\bottomrule
\multicolumn{5}{@{}p{0.97\columnwidth}@{}}{\scriptsize $^{\dagger}$Shared by
the zero-advantage control, clipping-norm study, lower-compute arm, both DP-SFT
$T{=}50$ pairs, and gold-SFT. $^{\ddagger}$Every arm on both benchmarks.
MATH, hist.\ rank-4 is the canary run (Appendix~\ref{app:mechanism-verification}).}
\end{tabular}
\end{table}

\subsection{Status of the assumptions}
\label{app:assumptions}

\paragraph{(A1)}
Holds for every declared run: each run's public inputs are fixed before it
starts, the step count $T$ is fixed in advance, line~8 divides by the
public $qn$ rather than the realized batch size, and constant-reward
groups are retained, so no choice introduces data-dependent control flow.
Hyperparameters were selected on pilot runs over the same training
prompts, following the convention of the private fine-tuning
literature~\cite{li2022large,yu2022differentially}, and, as in that
literature, the stated budget covers the reported run and not the
selection; a private tuning analysis~\cite{papernot2022hyperparameter}
would add a bounded amount. One supervised comparator, the
resumed 100-update DP-SFT run, had its verifier-trip limit raised after a
trip on private completions (Appendix~\ref{app:sft-recovery}); that
choice is the one run-time decision made with access to $D$, and it
concerns a comparator, not \ours{}.

\paragraph{(A2)}
Enforced on the training side: each prompt group is backpropagated
separately (line~6), clipping follows chunk accumulation (line~7), and a
non-finite prompt gradient is replaced by zero by a fixed per-record rule.
The cross-prompt isolation and chunk-identity tests in
Appendix~\ref{app:mechanism-verification} check this property. A non-finite
\emph{noisy} update skips the optimizer step, which is post-processing. Two
execution effects can break (A2) outside exact, isolated computation, and the
theorem does not cover them. First, the rollouts of line~4 for all sampled
prompts are submitted as independent single-sequence requests to one
inference engine. In exact arithmetic, batch composition does not change any
request's conditional token distribution; floating-point inference kernels
are not guaranteed to be batch-invariant, so a prompt's realized rollouts may
depend on which prompts share its batch. Second, the bounded training
verifier of line~5 applies a 10-second wall-clock limit, so a reward can
depend on machine load created by other records. We measure neither effect.
Verifier-filtered DP-SFT (Corollary~\ref{cor:baselines}) generates its
traces through the same engine and filters them with the same verifier, so
both gaps apply to it; gold-SFT generates nothing and calls no verifier, so
neither applies.

\paragraph{(A3)}
Holds for every released model. Their sampling and noise generators are
seeded from operating-system entropy, and recovery files, including
generator states and verifier trip logs that contain private completions,
are not released. The earlier design-study runs seeded their generators
from documented experiment seeds for reproducibility
(Appendix~\ref{app:config-shared}); they are utility measurements and are
not among the released models.

\paragraph{(A4)}
Idealized. Floating-point noise generation can violate DP guarantees that
hold over the reals~\cite{mironov2012significance,jin2022are}, and the
measured clipped sensitivity exceeds $C$ by one unit in the last place.
Data-dependent system telemetry, such as per-step overflow counts, is outside
the release covered here. The theorem certifies the mechanism, not its
finite-precision execution.

\section{Mechanism Verification}
\label{app:mechanism-verification}

The tests in this appendix verify that the implementation realizes the stated
mechanism. They do not replace the differential-privacy proof or accountant.
The machine-readable record is
\texttt{results/mechanism\_verification.json}; it records Python, PyTorch,
\texttt{dp-accounting}, Opacus, and NumPy versions with a UTC timestamp.

\paragraph{Global add/remove sensitivity.}
We form a base dataset of five prompt gradients for a two-layer double-precision
model. For each of 20 random additions and one adversarial addition with
magnitude $10^{12}$, we recompute the sum of independently clipped prompt
gradients. With $C=0.7$, the largest observed
$\lVert f(D')-f(D)\rVert_2/C$ over 21 trials exceeds one by
$2.2\times10^{-16}$, one binary64 unit in the last place, below the test
tolerance $10^{-10}$.

\paragraph{Non-private equivalence.}
At $q=1$, $\sigma=0$, and $C=\infty$, the trainer averages $n$ prompt losses,
each of which averages exactly $G$ rollout losses. For equal group size, this
is the flat mean over all $nG$ sequences. A double-precision SGD step through
the private trainer and a flat baseline has maximum absolute parameter
difference zero at tolerance $10^{-12}$.

\paragraph{Poisson frequency and independence.}
For six public slots sampled independently with $q=0.2$ over 20,000 trials, the
maximum absolute inclusion-frequency error is $0.0040$ and the maximum absolute
off-diagonal inclusion correlation is $0.0141$. These are below
the preregistered tolerances $0.012$ and $0.035$, respectively. This CPU gate is
separate from the rollout-generation regression test, which requires each
completion to be an unseeded single-sequence request.

\paragraph{Cross-prompt isolation.}
The test computes prompt $i$'s gradient, evaluates prompt $j$, changes all of
prompt $j$'s rewards, evaluates it again, and then recomputes prompt $i$.
Prompt $i$'s gradient is bit-identical before and after the perturbation. A
second test exercises this property through the production adapter and trainer,
so batching cannot move reward statistics across prompt boundaries.

\paragraph{Zero-advantage groups.}
All-correct and all-wrong groups produce exactly zero normalized advantages.
Both losses and gradients remain finite; the two gradients are bit-identical
and nonzero because the KL term remains active. Such records are retained and
clipped rather than removed through data-dependent control flow.

\paragraph{External accountant validation.}
For $q=0.1$, noise multiplier $3.0$, 512 steps, and $\delta=10^{-6}$, the
repository privacy-loss-distribution (PLD) accountant returns
$\epsilon=3.7697$, against the published JAX-Privacy reference of $3.77$
(absolute difference $3.1\times10^{-4}$). Opacus RDP returns the larger
value $4.052$ (order 7).

\paragraph{Rollout-chunked backward identity.}
For eight rollouts, the test compares one unchunked prompt backward pass with
four chunked passes of size two accumulated into the same prompt gradient.
The detached scalar losses agree at $(\mathrm{rtol},\mathrm{atol})=(10^{-6},
10^{-7})$, and every parameter gradient agrees at $(10^{-10},10^{-12})$.
The production-adapter test repeats the comparison for chunk sizes one, two,
four, and eight. Clipping occurs only after all chunks have accumulated, so
chunking changes memory geometry but not the prompt record or clipping boundary.

\paragraph{Separate fp16 preflight.}
At each of the three clipping norms, 21 sensitivity trials give a maximum
ratio one fp32 ULP ($1.2\times10^{-7}$) above one. Cross-prompt gradients
are bit-identical and constant-reward groups are retained. The designated
chunk-identity case has maximum absolute difference
$2^{-12}$, so it is not bit-identical. A follow-up set records no
clip-decision flips at the three tested norms; neither test bounds the
discrepancy near every clipping threshold.

\paragraph{Reproduction entry points.}
\path{scripts/emit_verification_artifact.py} produces the machine-readable
record from the suites \path{tests/test_dp_trainer.py},
\path{tests/test_chunked_grpo.py} and \path{tests/test_math_dp_adapter.py};
the fp16 records are kept separately from the binary64 ones.

\paragraph{Canary diagnostic.}
One 100-update bf16 run in the canary configuration of
Table~\ref{tab:full-params} (nominal $\epsilon=5.04$ at
$\delta=1/69{,}880$) replaced 1,000 of the 6,988 training slots with
randomized canaries, 532 present and 468 null. With $q=128/6{,}988$, a
present canary is sampled $1.83$ times in expectation and never with
probability $(1-q)^{100}=0.157$; realized inclusions were not logged. The
empirical lower bound returned is $0.0$. The audit therefore gives no
positive lower bound and says nothing about the tightness of the
accountant; its generators were seeded for reproducibility, so it is a
mechanism diagnostic rather than one of the released models
(Appendix~\ref{app:config-shared}). A powered audit
is future work (Section~\ref{sec:limitations}).

\section{Configurations and Provenance}
\label{app:reproducibility}
\label{app:config-ledger}

\subsection{Shared Training Configuration}
\label{app:config-shared}

Every arm trains rank-4 LoRA adapters ($\alpha{=}8$, dropout $0.05$) on the
seven attention and MLP projection matrices of Qwen2.5-1.5B-Instruct,
$p=4{,}616{,}192$ trainable coordinates in 392 tensors, with
AdamW; the noise $z_t/(qn)$ of Algorithm~\ref{alg:dpgrpo} has expected
norm $\sigma C\sqrt{p}/(qn)\approx0.45$ on MATH and $0.48$ on GSM8K,
eight to nine times the largest possible averaged clipped signal
$C=0.055$. Training runs on one RTX~6000 GPU with fp16 weights and fp32 loss, clipping,
noise and optimizer arithmetic. GRPO completions are drawn as independent,
unseeded single-sequence requests. Poisson sampling and Gaussian noise draw
their randomness from operating-system entropy, independently of the
initialization seed. The earlier design-study runs (the budget sweep of
Table~\ref{tab:design}, the non-private references in
Tables~\ref{tab:robust} and~\ref{tab:gsm8k}, and the canary run of
Appendix~\ref{app:mechanism-verification}) were trained in bf16 on
different hardware and kernels with generators seeded for
reproducibility; each is compared only with the base evaluated under its
own precision, and the two families are never pooled.

Training uses the GRPO trainer of TRL version 0.15.2~\cite{vonwerra2020trl}
with one optimizer step per rollout batch. The loss~\eqref{eq:loss}, the
advantage normalization~\eqref{eq:adv} with $c=10^{-4}$ and a $G-1$
standard-deviation denominator, and the per-token KL
estimator~\eqref{eq:grpo} reproduce that trainer's computation; the
per-prompt clipping, noise, and $qn$ normalization of
Algorithm~\ref{alg:dpgrpo} replace its batch-level gradient step.

\subsection{Mechanism Parameters}
\label{app:config-params}

Table~\ref{tab:full-params} gives every mechanism configuration at full
precision; the body rounds these to three decimals. The sampling rate is
$q=m/n$ exactly and $\delta=1/(10n)$, i.e.\ $1/69{,}880$ on MATH and
$1/69{,}610$ on GSM8K and CommonsenseQA. The clipping norm is
$C=0.05536$ for \ours{} and the zero-advantage control, with
$C\in\{0.02768,0.11072\}$ in the clipping sweep, and $C=0.1143670583$ for
both supervised arms, inherited from the private fine-tuning recipe they
follow~\cite{li2022large,yu2022differentially} and not swept; their
learning rates are $5\times10^{-6}$ (filtered) and $2\times10^{-5}$
(gold), selected in Appendix~\ref{app:config-pilots}.
Table~\ref{tab:rdp} gives the resulting budgets.

\begin{table}[t]
\centering
\caption{Mechanism parameters at full precision. $m$ is the expected
number of prompts per round.}
\label{tab:full-params}
\footnotesize
\setlength{\tabcolsep}{3pt}
\begin{tabular}{@{}lrrrl@{}}
\toprule
Configuration & $n$ & $m$ & $T$ & $\sigma$ \\
\midrule
MATH, reference$^{\dagger}$     & 6{,}988 & 128 & 50  & 0.4842404738068581 \\
MATH, $\epsilon{=}4.23$ (bf16)  & 6{,}988 & 128 & 50  & 0.612058477104 \\
MATH, $\epsilon{=}1$ (bf16)     & 6{,}988 & 128 & 50  & 1.0023417361080647 \\
MATH, coverage ($G{=}2$)        & 6{,}988 & 512 & 50  & 0.7022244453430175 \\
MATH, DP-SFT, 100 updates       & 6{,}988 & 128 & 100 & 0.519267738191411 \\
MATH, canary (bf16)             & 6{,}988 & 128 & 100 & 0.612058477104 \\
GSM8K and CSQA, all arms        & 6{,}961 & 128 & 100 & 0.5196564674377442 \\
\bottomrule
\multicolumn{5}{@{}p{0.97\columnwidth}@{}}{\scriptsize $^{\dagger}$Shared
by \ours{} and the control, the $\epsilon{=}8$ budget-sweep cell, the
clipping sweep, the lower-compute arm ($G{=}2$), and the $T{=}50$ DP-SFT
arms.}
\end{tabular}
\end{table}

\subsection{Evaluation Sets and Scoring}
\label{app:config-eval}

\paragraph{MATH.}
The 4,499-problem endpoint is the 5,000-problem test split minus one
problem duplicated in the training set and the 500 MATH-500 problems that
earlier runs, and the supervised learning-rate pilots
(Appendix~\ref{app:config-pilots}), used for selection. The exclusion is
by problem identity from saved 4,999-problem predictions; no model is
rerun. It was fixed after those earlier runs. The base model solves 2,207 problems (49.1\%)
under fp16 and 2,212 (49.2\%) under bf16; each study is compared with the
base of its own precision, and neither correctness vector is substituted
for the other.

\paragraph{GSM8K, CommonsenseQA and SVAMP.}
GSM8K is scored on its 1,319 test problems. The 6,961 CommonsenseQA
training questions are drawn uniformly at random from the training split
after excluding 700 questions used in a pilot, and all 1,221 validation
questions are scored. SVAMP is scored with GSM8K's prompt, decoding and
cap (Appendix~\ref{app:svamp}).

\paragraph{Statistics.}
Problem-cluster bootstrap intervals use 100,000 draws and
sign-randomization $p$-values use 1,000,000 draws, each from a fixed seed
recorded with the analysis. On CommonsenseQA, Holm adjustment runs across
its eight contrasts (four comparators under two scorings).

\subsection{Learning-Rate Pilots}
\label{app:config-pilots}

Each supervised recipe's learning rate was chosen by one pilot run per
value in $\{5\times10^{-6},2\times10^{-5},8\times10^{-5},3\times10^{-4}\}$
on MATH, with one initialization seed and the private configuration of
Table~\ref{tab:full-params}, selected by MATH-500 accuracy at step 50 with
ties broken toward the smaller rate. MATH-500 is disjoint from the 4,499-problem endpoint. Filtered DP-SFT selected
$5\times10^{-6}$ and gold DP-SFT $2\times10^{-5}$; both were carried
unchanged to GSM8K and CommonsenseQA. In increasing order of rate, the
filtered pilots scored 50.6, 49.2, 49.6 and 50.6\% on MATH-500, a flat
profile whose selection is a tie, and the gold pilots scored 47.8, 48.2,
40.6 and 29.4\%. The public base answers 48.6\% of the same problems under
the endpoint evaluation, so no pilot rate improves on it by more than two
points. Training loss is private telemetry and was not logged
(Section~\ref{sec:threat}), so these pilots are the only evidence on the
supervised learning rates. As is conventional in private fine-tuning,
the pilots' use of the training prompts is not charged to the reported
budget (Appendix~\ref{app:assumptions}).

\subsection{Resumed and Recovered Runs}
\label{app:config-recovery}

Six reported trajectories were resumed from checkpoints. Each checkpoint
stores the optimizer, privacy ledger, and framework, CUDA, sampling and
noise generator states, which are restored after inference-engine
initialization. None is claimed to be bitwise equivalent to uninterrupted
execution, and the recovery states are not part of any release.

\paragraph{MATH clipping sweep, high-$C$ seed 2.}
The $C=0.11072$ seed-2 run resumed from checkpoint 35, replaying updates
36--40 and then continuing to 50: 55 optimizer computations, 50
compositions in its final ledger. The inference engine reseeded framework
and CUDA randomness after the dedicated generators were restored.
Table~\ref{tab:design} reports this trajectory, not its clean replication.
The ledger accounts for the retained trajectory only; a release that
included the discarded branch as well would have to compose both.

\paragraph{MATH 100-update DP-SFT, seed 1.}
\label{app:sft-recovery}
The run exceeded the training-verifier trip limit, which was raised from
5 to 10, and resumed from checkpoint 45, replaying updates 46--50: 105
optimizer computations produced the retained 100-update trajectory. The
accountant covers the retained trajectory, not the abandoned branch. The
trip occurred on private completions and the limit was raised afterwards,
so this comparator run depends on one choice made with access to $D$
(Appendix~\ref{app:assumptions}).

\paragraph{CommonsenseQA, seeds 1--2 of \ours{} and the control.}
The four runs paused cleanly before scheduled cluster maintenance, at
checkpoints 60--66, and resumed from those checkpoints. Each saved step
equalled the number of completed updates, so no update was replayed.

\paragraph{Allocation study.}
Three allocation runs failed during process shutdown after training had
finished. Their final checkpoints were admitted after independent
verification, without further training.

\subsection{Protocol Timeline}
\label{app:protocol-timeline}

Table~\ref{tab:protocol-timeline} collects every decision that was made
after some data or result had been seen, with the runs it preceded and
followed; each is discussed where it arises in the text.

\begin{table}[t]
\centering
\caption{Protocol decisions and their timing.}
\label{tab:protocol-timeline}
\footnotesize
\setlength{\tabcolsep}{3pt}
\begin{tabular}{@{}p{0.36\columnwidth}p{0.6\columnwidth}@{}}
\toprule
Decision & Fixed when \\
\midrule
Strict scorers (all benchmarks) & Before any fixed-protocol arm was run (Sec.~\ref{sec:design-endpoints}). \\
\ours{} LR, $C$, $G$, $T$ on MATH & In the MATH design study, before the fixed protocol (Sec.~\ref{sec:design-stage1}). \\
Supervised learning rates & Four-point pilots on MATH-500, before the fixed protocol (App.~\ref{app:config-pilots}). \\
GSM8K horizon and cap & Before any private GSM8K run, from one earlier non-private run (Sec.~\ref{sec:design-setup}). \\
MATH-500 excluded from evaluation & After the earlier runs that used it for selection; applied by problem identity to saved predictions (App.~\ref{app:config-eval}). \\
Seeds 3--4, \ours{} and control, MATH and GSM8K & After the seed 1--2 results were inspected (Sec.~\ref{sec:design-arms}); four-seed contrasts are sequential. \\
CSQA answer declaration format & After a pilot with a stricter marker failed, before the CSQA arms (Sec.~\ref{sec:design-setup}). \\
CSQA final-answer rescore rule & After strict results of base, \ours{}, control and gold SFT were known; hash-frozen before rescoring them and before filtered-SFT evaluation (App.~\ref{app:csqa-extraction}). \\
MATH format-robust rules & Developed on public-base predictions; two arm-blind audits before arm-level rescoring (App.~\ref{app:format-robust}). \\
Verifier trip limit $5\to10$ & After one 100-update DP-SFT run tripped; that run resumed (App.~\ref{app:sft-recovery}). \\
Resumed trajectories & High-$C$ seed 2 (updates 36--40 replayed); 100-update DP-SFT seed 1 (46--50 replayed); CSQA seeds 1--2 of \ours{} and control (no replay); three allocation runs admitted at their final checkpoint (App.~\ref{app:config-recovery}). \\
\bottomrule
\end{tabular}
\end{table}

\section{Supplementary Results}
\label{app:historical-results}

This appendix gives per-seed results and the supporting design studies.
Each contrast is computed within its own study; historical bf16 runs and
the fp16 comparisons are kept separate.

\begin{table}[t]
\centering
\caption{Every arm, the benchmarks it ran on, its seeds, and where its
results are reported. Seeds for Stage~2 are given as MATH / GSM8K / CSQA.}
\label{tab:arms}
\footnotesize
\setlength{\tabcolsep}{2.5pt}
\begin{tabular}{@{}llll@{}}
\toprule
Arm & Benchmark & Seeds & Reported \\
\midrule
\multicolumn{4}{@{}l}{\emph{Stage 1: design study}} \\
$\epsilon\in\{1,4.23,8\}$ (bf16)          & MATH  & 2 each & Tab.~\ref{tab:design} \\
$C\in\{\tfrac12,1,2\}\times0.055$         & MATH  & 2 each & Tab.~\ref{tab:design} \\
Rollout allocation (2 alt.)               & MATH  & 2 each & Tab.~\ref{tab:design} \\
Same-LR and 100-update SFT                & MATH  & 2 each & Sec.~\ref{sec:res-c2} \\
$\sigma{=}0$, clipped / unclipped         & GSM8K & 2 each & Fig.~\ref{fig:controls} \\
$\sigma{=}0$, clipped (bf16)              & MATH  & 2      & App.~\ref{sec:res-c3b} \\
$\sigma{=}0$, unclipped, LR $10^{-4}$     & GSM8K & 1      & App.~\ref{sec:res-c3b} \\
\midrule
\multicolumn{4}{@{}l}{\emph{Stage 2: fixed protocol}} \\
Public base                  & all three & --        & Fig.~\ref{fig:results}, \\
\ours{} ($\epsilon{=}8$)    & all three & 4 / 4 / 2 & \quad Tabs.~\ref{tab:contrasts}, \\
Zero-advantage control       & all three & 4 / 4 / 2 & \quad \ref{tab:robust}, \ref{tab:csqa} \\
Filtered DP-SFT              & all three & 2 / 2 / 2 & \\
Gold DP-SFT                  & all three & 2 / 2 / 2 & \\
\midrule
\multicolumn{4}{@{}l}{\emph{Evaluation only}} \\
GSM8K checkpoints, all arms  & SVAMP & as GSM8K & App.~\ref{app:svamp} \\
\midrule
\multicolumn{4}{@{}l}{\emph{Historical non-private runs (bf16, rank 16)}} \\
Standard GRPO, 900 updates    & MATH  & 1 & Tab.~\ref{tab:robust} \\
Standard GRPO, 2 epochs       & GSM8K & 1 & Tab.~\ref{tab:gsm8k} \\
\bottomrule
\end{tabular}
\end{table}

\subsection{Per-Seed Results of the Fixed Protocol}
\label{app:per-seed}

\begin{table}[h]
\centering
\caption{Strict accuracy (\%) of every seed plotted in
Fig.~\ref{fig:results}, and the MATH format-robust accuracies behind
Table~\ref{tab:robust}. Seeds in run order.}
\label{tab:per-seed}
\footnotesize
\setlength{\tabcolsep}{4pt}
\begin{tabular}{@{}llr@{}}
\toprule
Arm & Seeds & Mean \\
\midrule
\multicolumn{3}{@{}l}{\emph{MATH, strict}} \\
Public base      & --                        & 49.1 \\
Zero-adv.\ control & 48.9 / 48.8 / 49.1 / 48.8 & 48.9 \\
\ours{}        & 51.3 / 51.9 / 51.8 / 51.3 & 51.6 \\
Filtered DP-SFT  & 49.3 / 49.3               & 49.3 \\
Gold DP-SFT      & 48.7 / 48.6               & 48.6 \\
\midrule
\multicolumn{3}{@{}l}{\emph{GSM8K, strict}} \\
Public base      & --                        & 70.7 \\
Zero-adv.\ control & 70.9 / 71.1 / 71.6 / 70.7 & 71.1 \\
\ours{}        & 74.1 / 73.9 / 74.3 / 75.0 & 74.3 \\
Filtered DP-SFT  & 70.4 / 70.7               & 70.5 \\
Gold DP-SFT      & 70.1 / 70.4               & 70.2 \\
\midrule
\multicolumn{3}{@{}l}{\emph{MATH, format-robust}} \\
Public base      & --                        & 50.4 \\
Zero-adv.\ control & 50.3 / 50.2 / 50.3 / 50.4 & 50.3 \\
\ours{}        & 51.3 / 52.0 / 51.8 / 51.3 & 51.6 \\
\bottomrule
\end{tabular}
\end{table}

\begin{table}[h]
\centering
\caption{Per-seed differences (points) behind Table~\ref{tab:contrasts},
with Holm-adjusted exact McNemar $p$ (Holm within each contrast). The
supervised contrasts pair \ours{} seeds 1--2 with the supervised seeds.}
\label{tab:per-seed-contrasts}
\footnotesize
\setlength{\tabcolsep}{3pt}
\begin{tabular}{@{}lll@{}}
\toprule
Contrast & Per-seed difference & Holm $p$, largest \\
\midrule
\multicolumn{3}{@{}l}{\emph{MATH}} \\
\ours{} $-$ control  & $+2.31$ / $+3.16$ / $+2.62$ / $+2.51$ & $6.4\times10^{-5}$ \\
\ours{} $-$ filtered & $+1.98$ / $+2.67$                     & $2.4\times10^{-4}$ \\
\ours{} $-$ gold     & $+2.60$ / $+3.38$                     & $2.9\times10^{-6}$ \\
\midrule
\multicolumn{3}{@{}l}{\emph{GSM8K}} \\
\ours{} $-$ control  & $+3.26$ / $+2.81$ / $+2.65$ / $+4.25$ & $0.013$ \\
\ours{} $-$ filtered & $+3.79$ / $+3.26$                     & $1.1\times10^{-3}$ \\
\ours{} $-$ gold     & $+4.09$ / $+3.49$                     & $7.5\times10^{-4}$ \\
\bottomrule
\end{tabular}
\par\smallskip
\raggedright\footnotesize
MATH format-robust \ours{} $-$ control per seed: $+0.98$ / $+1.84$ /
$+1.56$ / $+0.98$.
\end{table}

\subsection{Budget, Clipping, and Non-Private References}
\label{sec:res-c3b}
\label{sec:res-budget}
\begin{figure}[t]
\centering
\includegraphics[width=\columnwidth]{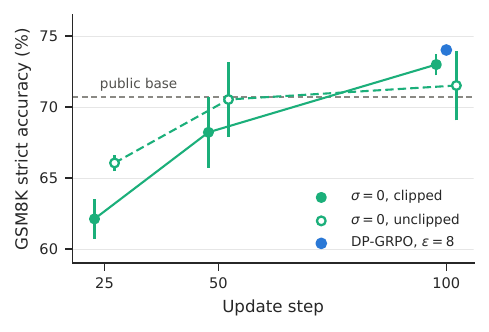}
\caption{Non-private GSM8K references at \ours{}'s hyperparameters:
noise removed, with clipping retained or disabled. Dots show two-seed
means; bars span the seeds, not confidence intervals. \ours{} is the
mean of its two matched seeds at the final step (74.0\%). The earlier
non-private run of Table~\ref{tab:gsm8k} (75.4\%) was trained at a
different rank, learning rate and horizon and is not shown.}
\label{fig:controls}
\end{figure}

Within the tested ranges, neither the accounted budget nor the clipping
norm moves MATH accuracy detectably (Table~\ref{tab:design}). Relative
to $\epsilon{=}8$, the bf16 budget sweep gives $-0.09$ [$-0.66$, $0.48$]
at $\epsilon{=}4.23$ and $-0.57$ [$-1.19$, $0.06$] at $\epsilon{=}1$,
with MDEs 0.82 and 0.89; the interval for an eightfold tighter budget
excludes a cost above 1.19 pp. In the fp16 clipping
sweep, halving $C$ changes accuracy by $-0.30$ [$-0.72$, $0.12$] and
doubling it by $-0.03$ [$-0.37$, $0.30$] (MDEs 0.60 and 0.48). Because
the noise scales with $C$, this sweep moves clipping bias and absolute
noise together and does not separate them.

On the mathematical tasks the $\sigma{=}0$ references at the private
hyperparameters do not exceed \ours{} (Fig.~\ref{fig:controls}). On
GSM8K the matched two \ours{} seeds gain $+3.30$ [$1.44$, $5.16$] over
base; the clipped reference gains $+2.27$ [$0.23$, $4.32$] and the
unclipped one $+0.80$ [$-1.36$, $2.96$], its seeds ending at 74.0\% and
69.1\%. On MATH the two-seed clip-only reference at learning rate
$10^{-4}$ reaches 51.3\%, against 51.6\% for \ours{}, and a single-seed
unclipped GSM8K reference at the same rate reaches 74.3\% by step 25 and
60.1\% at step 100. These references keep hyperparameters selected under
noise and are not a non-private ceiling; the retention ratios of
Section~\ref{sec:res-nonprivate} use the earlier standard GRPO runs, and
on CommonsenseQA, where the reference at the private hyperparameters
exceeds \ours{} by about six points, the comparison is paired.

Table~\ref{tab:design} collects these MATH design comparisons with the
allocation study and the two supervised challenges. The budget rows are
bf16 runs and are compared only with each other. In the SFT block each
change is relative to selected-LR SFT, not to \ours{}. The high-$C$
clipping pair and the 100-update SFT arm each include one resumed seed
(Appendix~\ref{app:config-recovery}).

\begin{table}[t]
\centering
\caption{Supporting MATH design comparisons.}
\label{tab:design}
\footnotesize
\setlength{\tabcolsep}{4pt}
\begin{tabular}{@{}llr@{}}
\toprule
Alternative & Change (pp), 95\% CI & MDE (pp) \\
\midrule
\multicolumn{3}{@{}l}{\emph{Budget; reference: $\epsilon = 8$ (earlier study)}} \\
$\epsilon = 4.23$ & $-0.09$ [$-0.66$, $0.48$] & 0.82 \\
$\epsilon = 1$    & $-0.57$ [$-1.19$, $0.06$] & 0.89 \\
\midrule
\multicolumn{3}{@{}l}{\emph{Clipping; reference: $C = 0.055$}} \\
$C = 0.111$ & $-0.03$ [$-0.37$, $0.30$] & 0.48 \\
$C = 0.028$ & $-0.30$ [$-0.72$, $0.12$] & 0.60 \\
\midrule
\multicolumn{3}{@{}l}{\emph{Rollouts $\times$ prompts; reference: $8 \times 128$}} \\
$2 \times 512$ (same rollouts)    & $-0.98$ [$-1.73$, $-0.23$] & 1.08 \\
$2 \times 128$ (quarter rollouts) & $-0.96$ [$-1.74$, $-0.18$] & 1.12 \\
\midrule
\multicolumn{3}{@{}l}{\emph{SFT; reference: LR $5{\cdot}10^{-6}$, 50 updates}} \\
LR $3{\cdot}10^{-4}$ (\ours{}'s) & $+0.22$ [$-0.70$, $1.14$] & 1.32 \\
100 updates & $+0.54$ [$0.03$, $1.06$] & 0.73 \\
\bottomrule
\end{tabular}
\par\smallskip
\raggedright\footnotesize
Change is alternative minus the reference named in each block.
Two seeds per setting; studies are analyzed separately.
\end{table}

\subsection{Rollouts per Prompt versus Prompts per Round}
\label{sec:res-alloc}

On MATH, using two rather than eight rollouts per prompt reduces measured
training time from 12.1--12.2 hours to 3.7--3.8 hours, a 3.2$\times$
reduction, while lowering strict accuracy by 0.96 [0.18, 1.74] points.
Reallocating the same expected 1,024 rollouts to 512 prompts with two
responses each instead lowers accuracy by 0.98 [0.23, 1.73]
(Table~\ref{tab:design}). The latter changes the sampling rate and its
calibrated noise multiplier as well as group size.

Under format-robust scoring, the reference-minus-alternative gaps shrink
to $+0.49$ [$-0.27$, $1.26$] for lower compute and $+0.62$ [$-0.11$,
$1.37$] for broader coverage. The measured speedup remains, and about
half of the strict-accuracy cost is answer formatting rather than
correctness, consistent with fewer within-group contrasts per prompt
leaving less signal about the output format.

\subsection{Transfer to SVAMP}
\label{app:svamp}
\label{sec:res-svamp}

The released GSM8K checkpoints (step 100, no selection) were scored on
the 1{,}000 problems of SVAMP~\cite{patel2021svamp} with GSM8K's prompt,
decoding and 512-token cap, and a numeric extractor frozen before scoring.
No SVAMP problem matches a GSM8K training question exactly or with token
Jaccard similarity of at least 0.8. Scoring a released model on public
data is post-processing, so this costs no privacy budget.
Table~\ref{tab:svamp} gives the results. The four \ours{} seeds average
83.2\%, versus 82.0\% for base. All four point estimates are higher; the pooled gap is $+1.20$
[$-0.53$, $2.95$], with Holm-adjusted $p=0.575$. With 1,000 problems the
MDE is 2.49 points, so this check can detect a transfer gain of GSM8K's
size (3.6 points) but not one of a third of that size, which is what the
point estimate shows. The seed-level interval is [0.50, 1.90].
Differences from filtered and gold DP-SFT lie within the same MDE. SVAMP
is a transfer check, not an additional training benchmark.
The six contrasts were
Holm-corrected together, including two against the non-private
$\sigma{=}0$ references (77.9\% clipped, 78.1\% unclipped), which are not
in the table: 8.5--12.3\% of their outputs reach the length cap, against
at most 0.8\% for every arm shown, so their SVAMP accuracy reflects
response length rather than transfer.

\begin{table}[h]
\centering
\caption{SVAMP (1,000 problems): GSM8K checkpoints scored without further
training. Strict accuracy per seed, and \ours{}'s difference from each
arm (points, 95\% CI).}
\label{tab:svamp}
\footnotesize
\setlength{\tabcolsep}{3pt}
\begin{tabular}{@{}lllr@{}}
\toprule
Arm & Seeds & Mean & \ours{} $-$ arm \\
\midrule
Public base      & --                        & 82.0 & $+1.20$ [$-0.53$, $2.95$] \\
Zero-adv.\ control & 82.1 / 81.8 / 82.1 / 82.1 & 82.0 & $+1.18$ [$-0.35$, $2.73$] \\
Filtered DP-SFT  & 82.6 / 82.6               & 82.6 & $+0.25$ [$-1.45$, $1.95$] \\
Gold DP-SFT      & 82.2 / 82.7               & 82.5 & $+0.40$ [$-1.40$, $2.20$] \\
\ours{}        & 83.0 / 82.7 / 83.4 / 83.7 & 83.2 & -- \\
\bottomrule
\end{tabular}
\par\smallskip
\raggedright\footnotesize
MDEs 2.49, 2.20, 2.42 and 2.58 points; Holm-adjusted $p \ge 0.58$. The
supervised rows use \ours{} seeds 1--2.
\end{table}

\section{Answer Extraction and Scoring Sensitivity}
\label{app:format-robust}

\subsection{MATH: Strict and Format-Robust Scores}

Strict accuracy is the fraction of all 4,499 evaluation problems with a
correct boxed answer. Format-robust accuracy uses the same denominator
and the same saved responses. It preserves scores for boxed answers and
adds a fallback for unboxed responses shorter than the 1,024-token cap.
Thus a capped response is excluded from the fallback, not from evaluation.

The fallback uses the final explicit answer statement, then a final
inferential statement, then the final nonempty line. Within that segment
it extracts a mathematical expression, the right-hand side of an
assignment, or a stated answer value. It preserves final answers joined
by ``and'' or ``or'' rather than selecting one. The extractor never reads
the gold answer; the extracted candidate is compared with gold by the
existing symbolic equality verifier. The rules were developed on public-base
predictions. Two arm-blind audits led to corrections before unblinding:
excluding capped responses from fallback and preserving multiple answers.
The final deduplicated audit found 60/60 candidates consistent with the
completed final statement and no observed false positives. No verifier
guard tripped across the 47 unique prediction artifacts rescored.

\paragraph{Decomposing the improvement.}
Table~\ref{tab:math-format-decomposition} partitions problems by the
base model's response. ``Questions'' is a count. Each contribution is the
net number of additional correct answers in that group, averaged over
four seeds, divided by \emph{all 4,499 problems} and multiplied by 100.
It is measured in percentage points, not the group's own accuracy.
The contributions therefore sum to the total gain over base, up to rounding.
Problems whose base response has no boxed answer account for $+2.75$
points under strict scoring but $+1.49$ under format-robust scoring. The
total gains
are $+2.51$ [$1.54$, $3.48$] and $+1.26$ [$0.33$, $2.17$], respectively.

\begin{table}[t]
\centering
\caption{Contributions to the MATH gain over base.}
\label{tab:math-format-decomposition}
\footnotesize
\setlength{\tabcolsep}{4pt}
\begin{tabular}{@{}lrrr@{}}
\toprule
Base response & Questions & \multicolumn{2}{c}{Contribution (pp)} \\
\cmidrule(l){3-4}
 & & Strict & Format-robust \\
\midrule
Boxed an answer        & 3,831 & $-0.24$ & $-0.24$ \\
No boxed answer        &   668 & $+2.75$ & $+1.49$ \\
\midrule
All problems           & 4,499 & $+2.51$ & $+1.26$ \\
\bottomrule
\end{tabular}
\end{table}

\paragraph{GSM8K.}
The primary extractor already accepts an unboxed final number. A separate
sensitivity analysis counts every response reaching the length cap as
incorrect; the reward contrast remains $+2.96$ [$1.33$, $4.59$] points.

\subsection{CommonsenseQA: Final-Answer Rescoring}
\label{app:csqa-extraction}

The strict score uses the training reward's letter extractor. Option
discussion can cause that extractor to reject a response even when it
later states a clear answer. The final-answer rule instead reads the last
explicit answer declaration and rejects conflicting choices within that
declaration. Both rules use all 1,221 validation questions and the same
completions. Rescoring changes neither training rewards nor model weights.

The rule was written after the strict results for base, \ours{},
the zero-advantage control, and gold SFT were known. It was hash-frozen before
rescoring those arms and before filtered-SFT evaluation. This is a
post-hoc sensitivity endpoint. The initial audit of 40 changed extractions
found no disagreements and no false positives among 21 rescored-correct
responses. A separate filtered-SFT audit sampled 40 of 578 changed records,
covering 37 distinct questions/completions. It found no disagreements and
no false positives among 23 rescored-correct responses. Seeds, gold
answers, and extracted answers were hidden in that audit, but the
filtered-SFT arm identity was known. These were agent audits, not
independent human assessments; zero observed errors does not establish
an error-free extractor.

The base has 815 correct answers and 21 remaining abstentions. Crediting
every abstention raises it to 836/1,221, still 2.99 points below
\ours{}'s mean of 872.5 correct answers. This descriptive bound addresses
unparsed base answers only, not errors in accepted extractions.

\end{document}